\newtheorem{prop}{Proposition}
\newtheorem{defn}{Definition}
\newtheorem{theo}{Theorem}
\newtheorem{exa}{Example}
\newtheorem{cor}{Corollary}
\begin{document}

\title{Lattice structures of fixed points of the lower approximations of two types of covering-based rough sets}

\author{Qingyin Li and William Zhu~\thanks{Corresponding author.
E-mail: williamfengzhu@gmail.com (William Zhu)}}
\institute{
Lab of Granular Computing,\\
Zhangzhou Normal University, Zhangzhou 363000, China}



\date{\today}          
\maketitle

\begin{abstract}
Covering is a common type of data structure and covering-based rough set theory is an efficient tool to process this data.
Lattice is an important algebraic structure and used extensively in investigating some types of generalized rough sets.
In this paper, we propose two family of sets and study the conditions that these two sets become some lattice structures. These two sets
are consisted by the fixed point of the lower approximations of the first type and the sixth type of covering-based rough sets, respectively.
These two sets are called the fixed point set of neighborhoods and the fixed point set of covering, respectively.
First, for any covering, 
the fixed point set of neighborhoods is 
 a complete and distributive lattice, at the same time, it is also a double $p-$algebra. Especially, when the neighborhood forms a partition of the universe,
the fixed point set of neighborhoods is both a boolean lattice and a double Stone algebra. Second, for any covering,
the fixed point set of covering is a complete lattice.
When the covering is unary, the fixed point set of covering becomes a distributive lattice and a double $p-$algebra.
 a distributive lattice and a double $p-$algebra when the covering is unary.
Especially, when 
 the reduction of the covering forms a partition of the universe,
the fixed point set of covering is both a boolean lattice and a double Stone algebra.\\
\textbf{Keywords:}
 rough sets, neighborhood, covering, lattice, join-irreducible, minimal description, unary, double
Stone algebra
\end{abstract}


\section{Introduction}

Rough set theory was introduced by Pawlak in 1982\cite{Pawlak82Rough}.
It is a new mathematical tool to handle inexact, uncertain or vague knowledge and
 has been successfully applied to various fields such as machine learning,
 pattern recognition and data mining and so on\cite{InuiguchiHiranoTsumoto03RoughSet,LinYaoZadeh01Rough,Pawlak91Rough}.
  The lower and upper approximations are two key concepts in rough set theory.
 As we know, equivalence relation or partition plays an important role in classical rough set theory.
 The family of equivalence classes forms a partition of the universe and every block of the partition is an equivalence class.
  An equivalence relation is 
  used in the definition of the lower and upper approximations.
 However, such an equivalence relation is too restrictive for many applications.
 In light of this, generalizations of rough sets were considered by many authors.
 One 
 is to consider a weaker restricted relation such as a similarity relation or tolerance
 relation\cite{Pomykala93Approximation,SlowinskiVanderpooten95Similarity}.
Another approach is 
to extend a partition to a covering.
 A covering is a more general concept than a partition.
In incomplete information systems, covering\cite{BianucciCattaneoCiucci07Entropies,QianLiangDang10Incomplete} is used to deal
with the 
attribute subset.
Since covering-based rough sets 
are more reasonable to deal with problems than classical rough sets,
who theory has obtained extensive attention and many meaningful research fruits \cite{WangZhuZhuMin12Quantitative,Zhu07Topological,ZhuWang07OnThree,ZhuWang03Reduction}. In order to establish applicable mathematical structures for
covering-based rough set and promote its applications, it has been combined
with fuzzy sets \cite{DengChenXuDai07Anovel,ZhouWu08OnGeneralized}, topology \cite{WangZhuZhu10Structure,Zhu07Topological},
and matroid \cite{LiuSai09AComparison,WangZhuMin11Transversal,ZhuWang11Matroidal}.

Lattice with both order structures and algebraic structures, and it is closely linked with many disciplines,
such as Group theory \cite{AjmalJain09Some}and so on.
Lattice theory plays an important role in many disciplines of computer
science and engineering. For example, they have applications in distributed
computing, programming language semantics \cite{MedinaOjeda10Multi,SyrrisPetridis11Alattic}.

Both lattice theory and rough sets are widely applied.
Many authors have combined these two theories and some important results
were obtained in both theoretical and application fields,
e.g. domain theory \cite{DaveyPriestley90Introduction} and Formal Concept Analysis\cite{WangLiuCao10ANew}.
 Based on the existing works 
 about the combination of rough sets and lattice theory,
Chen et al. \cite{ChenZhangYeungTsang06Rough} used the concept of covering to define approximation operators
on a complete completely distributive lattice and set up a unified framework for generalizations of rough sets.
A.A.Estaji et al. \cite{EstajiHooshmandaslDawaz12Rough} introduced the concepts of upper and lower rough
ideals and filters in a lattice, and some of their properties had been studied.
All these works help us comprehend of rough sets on lattice and have greatly enriched rough set theory and its applications.
Moreover, M.H.Ghanim et al. \cite{GhanimMustafaAziz11Onlower} defined two relations by lower intension and upper intension, respectively.
They pointed that two equivalence classes of these two relations are two partially ordered sets, respectively.
 Based on this, they studied some algebraic structures of these two  partially ordered sets.

In this paper, we study under what conditions two partially ordered sets are some lattice structures.
These two partially ordered sets are consisted by two sets together with the set inclusion, respectively.
 These two sets are called the fixed point set of neighborhoods and the fixed point set of covering, respectively.
 The fixed point set of neighborhoods is defined by the fixed points of the lower approximations of the sixth type of covering-based rough sets.
 The fixed point set of neighborhoods induced by any covering is equal to the one induced by the reduction of the covering.
 For any covering, the fixed point set of neighborhoods induced by the covering is a lattice and for any two
 elements of the fixed point set, the least upper bound is the join of these two elements and the greatest
 lower bound is the intersection of these two elements. For any covering, 
 the neighborhood of any element of the universe is a join-irreducible element of the fixed point set of neighborhoods.
 Moreover, the fixed point set of neighborhoods is also both a distributive lattice and a double $p-$algebra,
 and for any element, its pseudocomplement is the lower approximation of its complement and dual pseudocomplement is a
 union of all the neighborhoods of its complement. When the neighborhood forms a partition of the universe,
 the fixed point set of neighborhoods is both a boolean lattice and a double Stone algebra.
 The fixed point set of covering is defined by the fixed points of the lower approximations of the first type of covering-based rough sets.
 We can prove that the fixed point set of covering induced by any covering is equal to the one induced by the reduction of the covering. For any covering, the fixed point set of covering induced by the covering is a lattice and for any
 two elements of the fixed point set, the least upper bound is the join of these two elements and the greatest lower bound
is the lower approximation of the intersection of these two elements. For any covering, any irreducible element of the covering is
a join-irreducible element of the fixed point set of covering.
 The fixed point set of covering is both a distributive lattice and a double $p-$algebra when the covering is unary
 and for any element $X$, its pseudocomplement is the lower approximation of its complement and dual
pseudocomplement is a union of some blocks which are the irreducible elements of the covering and 
containing at least
an element that belongs to the complement of $X$ . When the reduction of any covering forms a partition of the universe,
the fixed point set of covering is both a boolean lattice and a double Stone algebra.

The rest of this paper is organized as follows. In Section 2, we review some basic
knowledge about rough sets and lattice. In Section 3, we study under what conditions that the fixed point set of neighborhoods becomes some lattice structures. In Section 4, we study under what conditions that the fixed point set of covering becomes some lattice structures.
Finally, we conclude this paper in Section 5.

\section{Basic definitions}

This section recalls some fundamental definitions related to rough sets and lattice.
\subsection{Covering-based rough sets}
\begin{defn} (Covering, covering approximation space\cite{BonikowskiBryniarskiWybraniecSkardowska98Extensions}) Let $U$ be a universe of discourse and $\mathbf{C}$ a family of nonempty subsets of $U$. If $\cup\mathbf{C}=U,$ then $\mathbf{C}$ is called a
covering of $U.$  We call the ordered pair $\langle U, C \rangle$ a covering approximation space.
\end{defn}

As we know, a partition of $U$ is certainly a covering of $U$, so the concept of a covering is an extension to the concept of a
partition.

For an object, we need only the essential characteristics to describing it 
rather than all characteristics. Based on this, the minimal description is established.
\begin{defn}
(Minimal description\cite{BonikowskiBryniarskiWybraniecSkardowska98Extensions,Zhu09RelationshipBetween}) Let $\langle U, C \rangle$ be a
covering approximation space, $x \in  U$. The set family $Md(x)$ is called the minimal description of $x$, where
\begin{center}
$Md(x)_{\mathbf{C}}=\{K\in \mathbf{C}|x\in K\wedge (\forall S\in \mathbf{C}\wedge x\in S \wedge S\subseteq K\Rightarrow K=S)\}.$
\end{center}
When there is no confusion, we omit the subscript $\mathbf{C}$.
\end{defn}

\begin{defn}(Unary\cite{Zhu09RelationshipBetween}) Let $\mathbf{C}$ be a covering of $U$. $\mathbf{C}$ is called unary if $\forall x\in U,$ $|Md(x)|=1$.

\end{defn}

The following theorem shows the relationship between the unary covering and the covering with the property that the intersection of
any two elements is a union of finite elements in this covering.
\begin{theo}(\cite{Zhu09RelationshipBetween})\label{T:Theo1}
A covering $\mathbf{C}$ is unary if and only if $\forall K_{1}, K_{2} \in  \mathbf{C},$ $K_{2} \cap K_{1}$ is a union of finite elements in $\mathbf{C}$.
\end{theo}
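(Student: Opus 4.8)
The statement is a biconditional, so the plan is to establish the two implications separately, working throughout with the set $Md(x)$ of minimal blocks containing a point. The single structural fact I would lean on is that on a finite universe every block $K\in\mathbf{C}$ with $x\in K$ dominates at least one element of $Md(x)$: among the blocks $S\in\mathbf{C}$ with $x\in S$ and $S\subseteq K$ there is a minimal one $M$, and such an $M$ is automatically minimal in all of $\mathbf{C}$ (if $T\in\mathbf{C}$, $x\in T$, $T\subseteq M$, then $T\subseteq K$ as well, so minimality of $M$ in the restricted family gives $T=M$). Hence $M\in Md(x)$ and $M\subseteq K$. This little lemma is what links the local notion of minimal description to the global intersection condition.

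For the forward direction I would assume $\mathbf{C}$ is unary and fix $K_{1},K_{2}\in\mathbf{C}$ with $K_{1}\cap K_{2}\neq\emptyset$. For each $x\in K_{1}\cap K_{2}$, unary means $Md(x)=\{M_{x}\}$ is a singleton. Applying the lemma above to $K_{1}$ and to $K_{2}$ separately forces $M_{x}\subseteq K_{1}$ and $M_{x}\subseteq K_{2}$, hence $M_{x}\subseteq K_{1}\cap K_{2}$, while $x\in M_{x}$. Taking the union over all such $x$ then yields $K_{1}\cap K_{2}=\bigcup_{x\in K_{1}\cap K_{2}}M_{x}$, a union of elements of $\mathbf{C}$; since the universe is finite, only finitely many distinct $M_{x}$ occur, so the intersection is a union of finitely many blocks.

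For the converse I would argue by contraposition. Suppose $\mathbf{C}$ is not unary, so some $x$ satisfies $|Md(x)|\geq 2$, giving two distinct minimal blocks $K_{1},K_{2}\in Md(x)$. Then $x\in K_{1}\cap K_{2}$, so by hypothesis $K_{1}\cap K_{2}=\bigcup_{i}S_{i}$ with each $S_{i}\in\mathbf{C}$ and the index set finite; choose $S_{j}$ with $x\in S_{j}$. Now $S_{j}\subseteq K_{1}\cap K_{2}\subseteq K_{1}$ together with $x\in S_{j}$ and the minimality of $K_{1}$ forces $S_{j}=K_{1}$, and symmetrically $S_{j}=K_{2}$, whence $K_{1}=K_{2}$, contradicting distinctness. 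This shows $|Md(x)|=1$ for every $x$, i.e. $\mathbf{C}$ is unary.

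The genuinely delicate point, and the one I would write out most carefully, is the existence-and-identification lemma in the first paragraph: it is what guarantees $Md(x)$ is nonempty and, under the unary hypothesis, that its unique element actually sits inside any block containing $x$. Everything else is a routine union/minimality chase. The only hidden assumption is finiteness (of $U$, or at least of $\mathbf{C}$), which is needed both to produce minimal blocks and to read ``a union of finite elements'' as a union of finitely many blocks; I would flag this assumption explicitly rather than let it pass silently.
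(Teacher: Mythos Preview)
The paper does not supply a proof of this theorem; it is quoted from \cite{Zhu09RelationshipBetween} as a known result, so there is no in-paper argument to compare against. Your proof is correct and is the natural one. The lemma you isolate---that any block $K\ni x$ contains some member of $Md(x)$---is precisely the bridge between the local condition $|Md(x)|=1$ and the global intersection-closure property, and both directions then follow by the routine chases you describe. You are also right to flag the finiteness hypothesis, which the paper (and the cited source) leave implicit.

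One cosmetic point: in the converse you announce ``contraposition'' but then assume both the intersection hypothesis and $|Md(x)|\geq 2$ and derive a contradiction; that is a direct proof of the implication by contradiction, not the contrapositive. The mathematics is unaffected.
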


\begin{defn}(Reducible covering, irreducible covering\cite{Zhu07Topological,ZhuWang07OnThree,ZhuWang03Reduction})
Let $\mathbf{C}$ be a covering of $U$ and $K \in \mathbf{C}$. If
$K$ is a union of some sets in $\mathbf{C}-\{K\}$, we say $K$ is reducible
in $\mathbf{C}$; otherwise, $K$ is irreducible. If every element of $\mathbf{C}$ is an irreducible
element, we say $\mathbf{C}$ is irreducible; otherwise $\mathbf{C}$  is reducible.
\end{defn}

The following two propositions are important results for us to apply the reducible element concept to covering-based
rough sets.
\begin{prop}(\cite{Zhu09RelationshipBetween})
Let $\mathbf{C}$ be a covering of $U$. If $K$ is a reducible element of
$\mathbf{C}$, then $\mathbf{C}-\{K\}$ is still a covering of $U$.
\end{prop}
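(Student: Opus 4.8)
The plan is to check directly the two defining conditions of a covering for the family $\mathbf{C}-\{K\}$: that every member is a nonempty subset of $U$, and that the members together cover $U$. The first condition is immediate, since every set in $\mathbf{C}-\{K\}$ already belonged to $\mathbf{C}$ and is therefore a nonempty subset of $U$; so all the real work lies in establishing that $\cup(\mathbf{C}-\{K\})=U$.

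The key step is to exploit the hypothesis that $K$ is reducible. By the definition of a reducible element, there is a subfamily $\mathcal{K}\subseteq\mathbf{C}-\{K\}$ with $K=\cup\mathcal{K}$. Since $\mathcal{K}\subseteq\mathbf{C}-\{K\}$, we immediately obtain $K=\cup\mathcal{K}\subseteq\cup(\mathbf{C}-\{K\})$. I would then split the known identity $\cup\mathbf{C}=U$ as $\cup\mathbf{C}=\bigl(\cup(\mathbf{C}-\{K\})\bigr)\cup K$; substituting the containment just obtained shows that the term $K$ contributes nothing new, so $\cup(\mathbf{C}-\{K\})=\cup\mathbf{C}=U$, which is exactly the covering condition.

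For completeness I would also remark that $\mathbf{C}-\{K\}$ is a legitimate (nonempty) family of nonempty sets: because $K$ is nonempty and $K=\cup\mathcal{K}$, the subfamily $\mathcal{K}$ cannot be empty, and hence $\mathbf{C}-\{K\}\neq\emptyset$. This rules out the degenerate situation in which deleting $K$ leaves nothing behind.

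There is no genuinely hard part here; the whole argument is a short definition-chase. The only point that requires a moment's care—and the one I would state explicitly—is that the coverage previously supplied by $K$ is entirely absorbed by the sets that remain in $\mathbf{C}-\{K\}$, which is precisely the content of reducibility.
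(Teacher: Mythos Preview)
Your argument is correct: reducibility of $K$ gives $K\subseteq\cup(\mathbf{C}-\{K\})$, whence $\cup(\mathbf{C}-\{K\})=\cup\mathbf{C}=U$, and the remaining covering conditions are inherited from $\mathbf{C}$. The paper does not supply its own proof of this proposition; it is quoted as a known result from the cited reference, so there is nothing further to compare.
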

\begin{prop}(\cite{Zhu09RelationshipBetween})
Let $\mathbf{C}$ be a covering of $U$, $K \in \mathbf{C}$, $K$ is a reducible element of $\mathbf{C}$,
and $K_{1} \in \mathbf{C}-\{K\}$, then $K_{1}$ is a reducible element of $\mathbf{C}$ if and only if it is a reducible element of $\mathbf{C}-\{K\}$.
\end{prop}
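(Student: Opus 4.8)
The plan is to prove the two implications separately, where the forward direction (reducibility in $\mathbf{C}-\{K\}$ forcing reducibility in $\mathbf{C}$) is immediate, and the reverse direction carries the real content. I would first note that, by the preceding proposition, $\mathbf{C}-\{K\}$ is still a covering of $U$ containing $K_{1}$, so both sides of the equivalence are well-posed.

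For the easy direction, suppose $K_{1}$ is reducible in $\mathbf{C}-\{K\}$, so that $K_{1}=\bigcup\mathcal{S}$ for some $\mathcal{S}\subseteq(\mathbf{C}-\{K\})-\{K_{1}\}=\mathbf{C}-\{K,K_{1}\}$. Since $\mathbf{C}-\{K,K_{1}\}\subseteq\mathbf{C}-\{K_{1}\}$, the very same family $\mathcal{S}$ witnesses that $K_{1}$ is a union of sets in $\mathbf{C}-\{K_{1}\}$, hence $K_{1}$ is reducible in $\mathbf{C}$.

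The reverse direction is the heart of the argument. Assume $K_{1}$ is reducible in $\mathbf{C}$, so $K_{1}=\bigcup\mathcal{S}$ with $\mathcal{S}\subseteq\mathbf{C}-\{K_{1}\}$. If $K\notin\mathcal{S}$, then $\mathcal{S}\subseteq\mathbf{C}-\{K,K_{1}\}$ and we are done at once. The remaining case is $K\in\mathcal{S}$, where I would invoke the hypothesis that $K$ itself is reducible in $\mathbf{C}$: $K=\bigcup\mathcal{T}$ for some $\mathcal{T}\subseteq\mathbf{C}-\{K\}$. The key observation — and the step I expect to be the main obstacle — is that $K_{1}\notin\mathcal{T}$. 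Indeed, $K\in\mathcal{S}$ gives $K\subseteq K_{1}$, while $K_{1}\in\mathcal{T}$ would give $K_{1}\subseteq K$, forcing $K=K_{1}$ and contradicting $K_{1}\in\mathbf{C}-\{K\}$. Hence $\mathcal{T}\subseteq\mathbf{C}-\{K,K_{1}\}$.

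Finally I would substitute $K$'s representation into that of $K_{1}$: writing $K_{1}=K\cup\bigcup(\mathcal{S}-\{K\})=\bigcup\mathcal{T}\cup\bigcup(\mathcal{S}-\{K\})$, both $\mathcal{T}$ and $\mathcal{S}-\{K\}$ lie in $\mathbf{C}-\{K,K_{1}\}$, so $K_{1}$ is exhibited as a union of sets in $(\mathbf{C}-\{K\})-\{K_{1}\}$, i.e.\ $K_{1}$ is reducible in $\mathbf{C}-\{K\}$. This completes the equivalence. The only subtlety is the containment/antisymmetry argument ruling out $K_{1}\in\mathcal{T}$; everything else is bookkeeping of set-family inclusions.
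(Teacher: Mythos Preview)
The paper does not give its own proof of this proposition; it is quoted from \cite{Zhu09RelationshipBetween} as a known preliminary result, so there is nothing to compare against. Your argument is correct in all details: the implication from reducibility in $\mathbf{C}-\{K\}$ to reducibility in $\mathbf{C}$ is immediate by the inclusion $\mathbf{C}-\{K,K_{1}\}\subseteq\mathbf{C}-\{K_{1}\}$, and for the converse the substitution $K=\bigcup\mathcal{T}$ together with the antisymmetry step ruling out $K_{1}\in\mathcal{T}$ (since $K\subseteq K_{1}$ and $K_{1}\subseteq K$ would force $K=K_{1}$) is exactly the standard way this fact is established.
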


\begin{defn}(A reduction of covering\cite{Zhu09RelationshipBetween})
For a covering $\mathbf{C}$ of $U$, the new irreducible covering
through the above reduction is called the reduction of $\mathbf{C}$, and denoted by
$reduct(\mathbf{C})$ .
\end{defn}
\begin{defn}(Neighborhood\cite{BonikowskiBryniarskiWybraniecSkardowska98Extensions,Zhu09RelationshipBetween})
Let $\mathbf{C}$ be a covering of $U$ and $x \in U$. $N_{\mathbf{C}}(x)=\cap\{K\in \mathbf{C}|x\in K\}$
is called the neighborhood of $x$ with respect to $\mathbf{C}$. When
there is no confusion, we omit the subscript $\mathbf{C}$.
\end{defn}

Our investigation in this paper involves two types of covering-based rough sets. They are the first and sixth types of covering-based rough sets.
\begin{defn} \label{D:DefnCoveringApproximation}
 (The first type of covering-based rough sets\cite{Zhu09RelationshipBetween}) Let $\mathbf{C}$ be a covering of $U.$ For any $X\subseteq U,$
\begin{center}
$FL_{\mathbf{C}}(X)=\cup\{K\in \mathbf{C}|K\subseteq X\},$\\
$FH_{\mathbf{C}}(X)=\cup\{K\in \mathbf{C}|K\cap X\neq \emptyset\},$
\end{center}
are called the first type of covering lower and upper approximations of $X$, respectively. When there is no confusion, we omit $\mathbf{C}$ at the lowercase.
\end{defn}

\begin{defn}(The sixth type of covering-based rough sets\cite{Zhu09RelationshipBetween})\label{D:DefnNeighborhoodApproximation}
Let $\mathbf{C}$ be a covering of $U.$ For any $X\subseteq U,$
\begin{center}
$XL_{\mathbf{C}}(X)=\{x\in U|N(x)\subseteq X\},$\\
$XH_{\mathbf{C}}(X)=\{x\in U|N(x)\cap X\neq \emptyset\},$
\end{center}
are called the sixth type of covering lower and upper approximations of $X$, respectively. When there is no confusion, we omit $\mathbf{C}$ at the lowercase.
\end{defn}

\begin{prop}(\cite{ZhuWang07OnThree,ZhuWang03Reduction})\label{P:Prop3}
Let $\mathbf{C}$ be a covering of $U$ and $\emptyset$ the empty set. For any $X\subseteq U,$ both the first and the sixth types of covering-based rough sets have the following properties:\\
   $(1)$ $FL(\emptyset)=\emptyset$, $XL(\emptyset)=\emptyset$;\\
   $(2)$ $FL(U)=U$, $XL(U)=U$;\\
   $(3)$ $FL(X)\subseteq X$, $XL(X)\subseteq X$;\\
   $(4)$ $FL(FL(X))=FL(X)$, $XL(XL(X))=XL(X)$;\\
   $(5)$ $ X\subseteq Y \Rightarrow FL(X)\subseteq FL(Y)$, $XL(X)\subseteq XL(Y)$;\\
   $(6)$ $\forall K\in \mathbf{C}, FL(K)=K$, $XL(K)=K$.
\end{prop}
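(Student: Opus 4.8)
The plan is to verify the six properties directly from the two definitions, handling the first type ($FL$) and the sixth type ($XL$) in parallel. Before starting I would record two elementary facts about the neighborhood that drive every argument about $XL$. First, since $x$ belongs to every block $K$ with $x \in K$, we have $x \in N(x)$ for all $x \in U$; in particular every neighborhood is nonempty. Second, I would establish the monotonicity of neighborhoods: if $y \in N(x)$, then $y$ lies in every block containing $x$, so $\{K \in \mathbf{C} \mid x \in K\} \subseteq \{K \in \mathbf{C} \mid y \in K\}$, and intersecting over the larger family gives the smaller set, whence $N(y) \subseteq N(x)$.

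With these in hand, properties $(1)$--$(3)$ and $(5)$ are immediate. For $(1)$, no nonempty block is contained in $\emptyset$, so $FL(\emptyset)=\emptyset$; and since $x \in N(x)$, no $x$ satisfies $N(x)\subseteq\emptyset$, so $XL(\emptyset)=\emptyset$. For $(2)$, $FL(U)=\cup\mathbf{C}=U$ because $\mathbf{C}$ is a covering, and $N(x)\subseteq U$ for every $x$ gives $XL(U)=U$. For $(3)$, each block in the union defining $FL(X)$ lies in $X$, while for $XL$ the containment follows from $x \in N(x)\subseteq X$. Property $(5)$ is read off the definitions: enlarging $X$ can only add blocks to the union defining $FL$ and can only relax the condition $N(x)\subseteq X$ defining $XL$.

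The two steps that actually consume the preliminary facts are $(4)$ and the nontrivial inclusion in $(6)$. For idempotence of $FL$, each block $K$ contributing to $FL(X)$ satisfies $K\subseteq FL(X)$, so $K$ also contributes to $FL(FL(X))$; this yields $FL(X)\subseteq FL(FL(X))$, and the reverse inclusion is $(3)$. For idempotence of $XL$, I would invoke the monotonicity lemma: if $N(x)\subseteq X$ and $y\in N(x)$, then $N(y)\subseteq N(x)\subseteq X$, so $y\in XL(X)$; hence $N(x)\subseteq XL(X)$ and $x\in XL(XL(X))$, the reverse inclusion again being $(3)$. For $(6)$, $FL(K)=K$ holds because $K\subseteq K$ makes $K$ a summand of its own lower approximation, and $XL(K)=K$ holds because any $x\in K$ has $K$ among the blocks containing it, so $N(x)\subseteq K$ and $x\in XL(K)$, with the reverse inclusion supplied by $(3)$.

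The only genuinely nonobvious point is the idempotence of $XL$ in $(4)$, which rests entirely on the neighborhood monotonicity $y\in N(x)\Rightarrow N(y)\subseteq N(x)$; once that lemma is isolated, all remaining verifications are routine set-theoretic bookkeeping.
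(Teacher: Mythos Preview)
Your verification is correct and complete; the neighborhood monotonicity lemma $y\in N(x)\Rightarrow N(y)\subseteq N(x)$ is exactly the ingredient needed for the idempotence of $XL$ and for $XL(K)=K$, and the remaining items are handled cleanly. Note, however, that the paper does not supply its own proof of this proposition: it is quoted from \cite{ZhuWang07OnThree,ZhuWang03Reduction} as a known result and used as a black box, so there is no in-paper argument to compare against. Your direct proof is the standard one and would serve perfectly well as a self-contained justification.
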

\begin{prop}(\cite{ZhuWang03Reduction})\label{P:Prop4}
Let $\mathbf{C}$ be a covering of $U$ and $K$ a reducible element of $\mathbf{C}$. For any $X\subseteq U$, $FL_{\mathbf{C}}(X)=FL_{\mathbf{C}-\{K\}}(X)$.
\end{prop}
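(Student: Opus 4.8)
The plan is to prove the equality $FL_{\mathbf{C}}(X)=FL_{\mathbf{C}-\{K\}}(X)$ by establishing the two inclusions separately, the first of which is immediate and the second of which is where the reducibility hypothesis does all the work. Throughout I would work directly from the set-theoretic definition $FL_{\mathbf{D}}(X)=\cup\{K'\in \mathbf{D}\mid K'\subseteq X\}$ for a covering $\mathbf{D}$.

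First I would observe that since $\mathbf{C}-\{K\}\subseteq \mathbf{C}$, every $K'$ with $K'\in \mathbf{C}-\{K\}$ and $K'\subseteq X$ also satisfies $K'\in \mathbf{C}$ and $K'\subseteq X$. Thus the collection whose union defines $FL_{\mathbf{C}-\{K\}}(X)$ is a subcollection of the one defining $FL_{\mathbf{C}}(X)$, which gives $FL_{\mathbf{C}-\{K\}}(X)\subseteq FL_{\mathbf{C}}(X)$ at once. The substance lies in the reverse inclusion. I would take an arbitrary $y\in FL_{\mathbf{C}}(X)$, so that by definition there is some $K'\in \mathbf{C}$ with $y\in K'$ and $K'\subseteq X$. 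If $K'\neq K$, then $K'\in \mathbf{C}-\{K\}$ already witnesses $y\in FL_{\mathbf{C}-\{K\}}(X)$ and nothing further is needed.

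The only delicate case is $K'=K$, that is, when the sole block certifying membership of $y$ is precisely the removed set; here I would invoke reducibility. Because $K$ is reducible in $\mathbf{C}$, it is a union of some members of $\mathbf{C}-\{K\}$, so $y$ lies in one such member $S$ with $S\subseteq K$. Since $K=K'\subseteq X$, we obtain $S\subseteq X$ with $S\in \mathbf{C}-\{K\}$, and therefore $y\in FL_{\mathbf{C}-\{K\}}(X)$. Combining the two inclusions yields the claim. The crux---and the only point at which reducibility is used---is exactly this step: one must replace the block $K$ by a smaller block of $\mathbf{C}-\{K\}$ that still contains $y$ and still fits inside $X$, and the definition of a reducible element is precisely what guarantees such a replacement exists. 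Everything else is routine bookkeeping with the definition of $FL$.
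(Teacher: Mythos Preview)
Your proof is correct. Note, however, that the paper does not actually supply its own proof of this proposition: it is quoted as a known result from \cite{ZhuWang03Reduction} and stated without argument. Your double-inclusion argument---with the trivial direction coming from $\mathbf{C}-\{K\}\subseteq\mathbf{C}$ and the nontrivial direction handled by replacing $K$ with one of the members of $\mathbf{C}-\{K\}$ whose union is $K$---is exactly the standard proof one finds in the cited source, so there is nothing to compare against beyond that.
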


The above proposition shows that $\mathbf{C}$ and $\mathbf{C}-\{K\}$ generate the same first type of lower approximation. The following corollary shows that $\mathbf{C}$ and $reduct(\mathbf{C})$ generate the same first type of lower approximation.
\begin{cor}(\cite{ZhuWang03Reduction})\label{C:Cor1}
Let $\mathbf{C}$ be a covering of $U$. For any $X\subseteq U$, $FL_{reduct(\mathbf{C})}(X)=FL_{\mathbf{C}}(X)$.
\end{cor}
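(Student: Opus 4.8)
The plan is to derive the result by iterating Proposition~\ref{P:Prop4}. By definition, $reduct(\mathbf{C})$ is obtained from $\mathbf{C}$ by repeatedly deleting reducible elements until an irreducible covering remains. I would record this reduction as a finite chain
\[
\mathbf{C} = \mathbf{C}_{0} \supseteq \mathbf{C}_{1} \supseteq \cdots \supseteq \mathbf{C}_{n} = reduct(\mathbf{C}),
\]
in which each $\mathbf{C}_{i+1} = \mathbf{C}_{i} - \{K_{i}\}$ for some element $K_{i}$ reducible in $\mathbf{C}_{i}$, and then prove by induction on $i$ that $FL_{\mathbf{C}_{i}}(X) = FL_{\mathbf{C}}(X)$ for every $X \subseteq U$.

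The base case $i = 0$ holds by definition. For the inductive step, since $K_{i}$ is reducible in $\mathbf{C}_{i}$, Proposition~\ref{P:Prop4} applied to the covering $\mathbf{C}_{i}$ yields $FL_{\mathbf{C}_{i}}(X) = FL_{\mathbf{C}_{i} - \{K_{i}\}}(X) = FL_{\mathbf{C}_{i+1}}(X)$; combining this with the induction hypothesis $FL_{\mathbf{C}_{i}}(X) = FL_{\mathbf{C}}(X)$ gives $FL_{\mathbf{C}_{i+1}}(X) = FL_{\mathbf{C}}(X)$. Setting $i = n$ then establishes the corollary for all $X \subseteq U$.

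The only point that genuinely needs justification is that this chain is legitimate and terminates at $reduct(\mathbf{C})$, since the step itself is immediate from Proposition~\ref{P:Prop4}. That each $\mathbf{C}_{i+1}$ remains a covering of $U$ is guaranteed by the earlier proposition stating that removing a reducible element from a covering leaves a covering, and the fact that the chain indeed converges to $reduct(\mathbf{C})$ irrespective of the order in which reducible elements are deleted is exactly what the subsequent proposition on preservation of reducibility—namely that $K_{1}$ is reducible in $\mathbf{C}$ if and only if it is reducible in $\mathbf{C} - \{K\}$—provides. Since the covering $\mathbf{C}$ is finite, the chain has finite length, so the induction ranges over finitely many steps and no limiting argument is required. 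For this reason I expect the proof to be essentially routine, with the well-definedness of the reduction being the only conceptual obstacle, and one already dispatched by the results recalled above.
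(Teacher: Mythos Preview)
Your argument is correct and is precisely the natural derivation: iterate Proposition~\ref{P:Prop4} along the finite deletion chain from $\mathbf{C}$ to $reduct(\mathbf{C})$, using the two auxiliary propositions to guarantee that each intermediate family is still a covering and that the endpoint is indeed $reduct(\mathbf{C})$. The paper itself does not supply a proof---the corollary is quoted from \cite{ZhuWang03Reduction}---so there is no alternative argument to compare against; your induction is exactly how the result is obtained from Proposition~\ref{P:Prop4} in the cited source.
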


The following theorem shows $\mathbf{C}$ and $reduct(\mathbf{C})$ generate the same sixth type of lower approximation.
\begin{theo}(\cite{Zhu09RelationshipBetween})\label{T:Theo2}
Let $\mathbf{C}$ be a covering of $U$. For any $X\subseteq U$, $XL_{reduct(\mathbf{C})}(X)=XL_{\mathbf{C}}(X)$.
\end{theo}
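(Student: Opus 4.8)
The plan is to reduce the claim to a statement purely about neighborhoods. Observe that the sixth type lower approximation $XL_{\mathbf{C}}(X)=\{x\in U\mid N_{\mathbf{C}}(x)\subseteq X\}$ depends on the covering $\mathbf{C}$ only through the neighborhood function $x\mapsto N_{\mathbf{C}}(x)$. Hence, if I can show that reduction does not alter any neighborhood, i.e. $N_{reduct(\mathbf{C})}(x)=N_{\mathbf{C}}(x)$ for every $x\in U$, then the equality $XL_{reduct(\mathbf{C})}(X)=XL_{\mathbf{C}}(X)$ follows immediately for every $X\subseteq U$, since both sides are then characterized by the identical condition on neighborhoods.

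The core step is therefore a one-element lemma: if $K$ is a reducible element of $\mathbf{C}$, then $N_{\mathbf{C}-\{K\}}(x)=N_{\mathbf{C}}(x)$ for all $x\in U$. I would prove this by cases on whether $x\in K$. If $x\notin K$, then $K$ never occurs among the sets $\{L\in\mathbf{C}\mid x\in L\}$ whose intersection defines $N_{\mathbf{C}}(x)$, so deleting $K$ leaves the neighborhood untouched. If $x\in K$, then $N_{\mathbf{C}}(x)=N_{\mathbf{C}-\{K\}}(x)\cap K$, and it remains to show $N_{\mathbf{C}-\{K\}}(x)\subseteq K$. Since $K$ is reducible, it is a union of sets from $\mathbf{C}-\{K\}$; as $x\in K$, one of these sets, say $S$, contains $x$. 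Then $S$ appears in the intersection defining $N_{\mathbf{C}-\{K\}}(x)$, so $N_{\mathbf{C}-\{K\}}(x)\subseteq S\subseteq K$, giving $N_{\mathbf{C}}(x)=N_{\mathbf{C}-\{K\}}(x)$.

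With the one-element lemma in hand, I would pass to the full reduction by iteration. By the proposition guaranteeing that removing a reducible element leaves a covering, together with the proposition asserting that reducibility of the remaining elements is preserved, the reduction $reduct(\mathbf{C})$ is obtained by deleting reducible elements one at a time. Applying the lemma at each deletion keeps every neighborhood fixed, so $N_{reduct(\mathbf{C})}(x)=N_{\mathbf{C}}(x)$ for all $x\in U$, and the theorem follows from the observation of the first paragraph.

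The step I expect to be the main obstacle is the inclusion $N_{\mathbf{C}-\{K\}}(x)\subseteq K$ in the case $x\in K$: this is exactly where the definition of a reducible element as a union of other blocks must be invoked to locate a witnessing block $S\ni x$ sitting inside $K$. A secondary subtlety is the iteration argument, which relies on the reduction being a well-defined finite process; this is precisely what the two cited propositions on reducible coverings supply, so no work beyond invoking them should be required.
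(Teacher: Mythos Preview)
Your argument is correct. The reduction to the invariance of neighborhoods is exactly the right decomposition: since $XL_{\mathbf{C}}(X)$ is defined pointwise through $N_{\mathbf{C}}(x)$, the equality $N_{reduct(\mathbf{C})}(x)=N_{\mathbf{C}}(x)$ for all $x$ immediately gives the theorem. Your one-step lemma is sound in both cases, and the only point worth making explicit is that in the case $x\in K$ the family $\{L\in\mathbf{C}-\{K\}\mid x\in L\}$ is nonempty (so that $N_{\mathbf{C}-\{K\}}(x)$ is a genuine intersection), but this is exactly what your witnessing block $S$ supplies. The iteration via Propositions~1 and~2 is the intended mechanism for passing from one deletion to the full reduction.

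Note, however, that the paper itself does \emph{not} supply a proof of this theorem: it is quoted as a known result from \cite{Zhu09RelationshipBetween} and stated without argument. So there is no in-paper proof to compare against. Your approach is the natural one and is essentially how the result is established in the cited source, where the invariance of neighborhoods under removal of a reducible element is the key lemma.
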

\subsection{Partially ordered set and lattice}

\begin{defn}(Partially ordered set\cite{Birkhoff95Lattice,GengQuWang04Discrete,Gratzer78General,Gratzer71Lattice})
Let $P$ be a nonempty set and $\leq $ a partial order on $P.$ For any $x, y, z \in P,$ if \\
$(1)$ $x\leq x$;\\
$(2)$ $x\leq y$ and $y\leq x$ imply $x=y$;\\
$(3)$ $x\leq y$ and $y \leq z$ imply $x\leq z$,\\
 then $\langle P,\leq\rangle$ (or $P$ for short) is called a partially ordered set.
\end{defn}

Based on the partially ordered set, we introduce the concept of lattice.
\begin{defn}(\cite{Birkhoff95Lattice,GengQuWang04Discrete,Gratzer78General,Gratzer71Lattice})\label{D:Defnlattice}
 A partially ordered set $\langle P,\leq\rangle$ is a lattice if $a\vee b$ and $a\wedge b$ exist for all $a,b \in P$.  $\langle P,\vee,\wedge\rangle$ is called an algebraic system induced by lattice $\langle P,\leq\rangle$.
 \end{defn}

In the following, we list the properties of the algebraic system induced by a lattice.
\begin{theo}(\cite{Birkhoff95Lattice,GengQuWang04Discrete,Gratzer78General,Gratzer71Lattice})\label{T:TheoIdentities}
Let $\langle P,\vee,\wedge\rangle$ be an algebraic system induced by lattice $\langle P,\leq\rangle.$ For any $a,b,c \in P,$ the algebra has the following identities:\\
$(P1)$ $a\vee a=a,a\wedge a=a;$\\
$(P2)$ $a\vee b=b\vee a,a\wedge b=b \wedge a;$\\
$(P3)$ $(a\vee b)\vee c=a\vee (b \vee c);$\\
$(P4)$ $a\vee (a\wedge b)=a, a\wedge(a\vee b)=a$.
\end{theo}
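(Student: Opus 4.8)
The plan is to prove each identity directly from the defining property of $\vee$ and $\wedge$ as, respectively, the least upper bound and greatest lower bound of a two-element set in the partial order $\leq$. Recall that $a \vee b = \sup\{a,b\}$ is characterized by two clauses: it is an upper bound (so $a \leq a \vee b$ and $b \leq a \vee b$) and it is the least one (if $a \leq c$ and $b \leq c$, then $a \vee b \leq c$); dually for $a \wedge b = \inf\{a,b\}$. Since the lattice assumption (Definition~\ref{D:Defnlattice}) guarantees these exist, the recurring device throughout will be to establish two reverse inequalities and then collapse them to an equality using the antisymmetry axiom~$(2)$ of the partial order.

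For $(P1)$, the idempotent laws, I would note that $a$ is an upper bound of $\{a\}$ and that every upper bound $c$ of $a$ satisfies $a \leq c$, so $a$ is the least upper bound and hence $a \vee a = a$; the meet case is dual. For $(P2)$, the commutative laws, the essential point is that $\{a,b\}$ and $\{b,a\}$ are the same set, and the supremum (resp.\ infimum) of a set depends only on the set, not on any ordering of its listed members, so $a \vee b = b \vee a$ and $a \wedge b = b \wedge a$ follow at once.

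For $(P4)$, the absorption laws, I would exploit the order relation between $a$ and its combinations with $b$. Because $a \wedge b \leq a$, the element $a$ is an upper bound of $\{a, a \wedge b\}$, and it is clearly the least such since any upper bound must dominate $a$; therefore $a \vee (a \wedge b) = a$. The dual argument, using $a \leq a \vee b$, shows that $a$ is the greatest lower bound of $\{a, a \vee b\}$, giving $a \wedge (a \vee b) = a$.

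The one step that takes a little care is $(P3)$, the associative law. My strategy is to show that both $(a \vee b) \vee c$ and $a \vee (b \vee c)$ coincide with the least upper bound of the three-element set $\{a,b,c\}$, and then conclude equality by antisymmetry. Concretely, I would first verify that $(a \vee b) \vee c$ dominates each of $a$, $b$, $c$ (using transitivity, e.g.\ $a \leq a \vee b \leq (a \vee b) \vee c$), and then check minimality: if $d$ dominates $a$, $b$, $c$, then $d$ dominates $a \vee b$ by the minimality clause, whence $d$ dominates $(a \vee b) \vee c$. The identical reasoning applied to $a \vee (b \vee c)$ shows it too is the supremum of $\{a,b,c\}$, so the two sides agree; the corresponding meet identity is handled dually. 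This ``collapse to the supremum of the full set'' is the main obstacle, since it is the only place where transitivity of $\leq$ and the minimality clause of the supremum must be combined rather than read off directly from the definitions.
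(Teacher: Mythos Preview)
Your proof is correct and is precisely the standard textbook argument. Note, however, that the paper does not actually supply its own proof of this theorem: it is stated with references to Birkhoff, Gr\"atzer, and Geng--Qu--Wang and then used without further justification. So there is nothing in the paper to compare against, but what you have written is exactly the derivation one finds in those sources---each identity is obtained by reading off the universal property of $\sup$ and $\inf$ in the poset $\langle P,\leq\rangle$ and invoking antisymmetry to turn two inequalities into an equality, with associativity handled by identifying both bracketings with $\sup\{a,b,c\}$.
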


The following definition shows the conditions that an algebraic system is a lattice. 
\begin{defn}(\cite{Birkhoff95Lattice,GengQuWang04Discrete,Gratzer78General,Gratzer71Lattice})\label{D:Defnalgebralattice}
Let $\langle P,\vee,\wedge\rangle$ be an algebraic system. 
If $\wedge$ and $\vee$ 
satisfy $(P2)-(P4),$ then $\langle P,\vee,\wedge\rangle$ is a lattice.
\end{defn}

According to Definition~\ref{D:Defnlattice}, Theorem~\ref{T:TheoIdentities} and Definition~\ref{D:Defnalgebralattice}, $\langle P,\vee,\wedge\rangle$  is an algebraic system induced by $\langle P,\leq\rangle$ and $\langle P,\vee,\wedge\rangle$ is also a lattice.
Therefore, we no longer differentiate $\langle P,\vee,\wedge\rangle$ and $\langle P,\leq\rangle$ and both of them are called lattice $P$.

Several special types of lattices are introduced in the following five definitions. 
\begin{defn}(Complete lattice\cite{Birkhoff95Lattice,GengQuWang04Discrete,Gratzer78General,Gratzer71Lattice})
A lattice $P$ is a complete lattice if $~\forall S\subseteq P,$ $\wedge S$ and $\vee S$ both in $P$.
\end{defn}

\begin{defn}(Distributive lattic\cite{Birkhoff95Lattice,GengQuWang04Discrete,Gratzer78General,Gratzer71Lattice})
A lattice $P$ is a distributive lattice if
\begin{center}
$a\vee(b\wedge c)=(a\vee b)\wedge (a\vee c)~~~~$  or  $~~~~~a\wedge (b\vee c)=(a\wedge b)\vee (a\wedge c)$
\end{center}
for all $a,b,c\in P$.
\end{defn}

\begin{defn}(Bounded lattice\cite{Birkhoff95Lattice,GengQuWang04Discrete,Gratzer78General,Gratzer71Lattice})
Let $\langle P,\vee,\wedge\rangle$ be a lattice. We say that $P$ has a greatest element if there exists an element $1 \in L$ such that $a\wedge 1=a$ for all $a\in P$. Dually, $P$ is said to have a least element if there exists an element $0\in L$ such that $a\vee 0=a$ for all $a \in P$. A lattice $\langle P,\vee,\wedge \rangle$ possessing $0$ and $1$ is called a bounded lattice.
\end{defn}

\begin{defn}(Complemented lattice\cite{Birkhoff95Lattice,GengQuWang04Discrete,Gratzer78General,Gratzer71Lattice})
Let $P$ be a bounded lattice with a least element $0$ and a greatest element $1$. For an element $a\in P$, we say that an element $b\in P$ is a complement of $a$ if $a\vee b =1$ and $a \wedge b =0$. If the element $a$ has a unique complement, we denote it by $a^{c}.$ A lattice $P$ is a complemented lattice if each element has a complement.
\end{defn}

\begin{defn}(Boolean lattice\cite{Birkhoff95Lattice,GengQuWang04Discrete,Gratzer78General,Gratzer71Lattice})
A lattice $P$ is called a boolean lattice if it is a complemented and distributive lattice.
\end{defn}

\begin{defn}(Join-irreducible\cite{Birkhoff95Lattice,GengQuWang04Discrete,Gratzer78General,Gratzer71Lattice})
Let $P$ be a lattice. An element $a \in P$ is called a join-irreducible if
 $a=b\vee c$ implies $a=b $ or $a = c$ for all $b, c\in P$. And all join-irreducible elements in $P$ are denoted by $\mathcal{J}(P)$
\end{defn}

Several definitions relate to Stone algebra are introduced in the following.
\begin{defn}(Stone algebra\cite{Katrinak74Injection,Katrinak74Construction})\label{D:Stone}
Let $P$ be a lattice with a least element. An element $x^{\ast}$ is a pseudocomplement of $x \in P$,
if $x \wedge x^{\ast}=0$ and for all $y\in P$, $x\wedge y=0$ implies $y \leq x^{\ast}$.
A lattice is pseudocomplemented if each element has a pseudocomplement.
 If $P$ is a distributive pseudocomplemented lattice, and it satisfies the Stone identity
$x^{\ast}\vee x^{\ast\ast} =1$ for all $x\in P$, then $P$ is called a Stone algebra.
\end{defn}

\begin{defn}(Dual Stone algebra\cite{Katrinak74Injection,Katrinak74Construction})\label{D:DualStone}
Let $P$ be a lattice with a greatest element. An element $x^{+}$ is a dual pseudocomplement of $x\in P$, if $x\vee x^{+}=1$ and for all $y \in P$, $x \vee y =1$ implies $x^{+}\leq y$. A lattice is dual pseudocomplemented if each element has a dual pseudocomplement. If $P$ is a distributive dual pseudocomplemented lattice, and it satisfies the dual Stone identity $x^{+}\wedge x^{++}=0$ for all $x \in P$, then $P$ is called a dual Stone algebra.
\end{defn}


\begin{defn}(Double $p-$algebra\cite{Katrinak74Injection,Katrinak74Construction})\label{D:Palgebra}
A lattice $P$ is called a double $p-$algebra if it is pseudocomplemented and dual pseudocomplemented.
\end{defn}

\begin{defn}(Double Stone algebra\cite{Katrinak74Injection,Katrinak74Construction})
A lattice $P$ is called a double Stone algebra if it is a Stone and a dual Stone algebra.
\end{defn}

\section{Lattice structure of the fixed point of the lower approximations of neighborhoods}
In this section, 
we study under what conditions a set becomes some special lattices, where the set is consisted of
the fixed points of lower approximations of the sixth type of covering-based rough sets. 
\begin{defn}
Let $\mathbf{C}$ be a covering of $U$. We define
\begin{center}
 $\mathcal{P}_{\mathbf{C}}=\{X\subseteq U|XL_{\mathbf{C}}(X)=X\}.$
 \end{center}
 $\mathcal{P}_{\mathbf{C}}$ is called the fixed point set of neighborhoods induced by $\mathbf{C}$.
We omit the subscript $\mathbf{C}$ when there is no confusion.
\end{defn}

 The following proposition shows that the fixed point set of neighborhoods induced by any covering
 of the universe is equal to the one induced by the reduction of the covering. 
\begin{prop}　　
 Suppose $\mathbf{C}$ is a covering of $U$, then $\mathcal{P}_{\mathbf{C}}=\mathcal{P}_{reduct(\mathbf{C})}.$
\end{prop}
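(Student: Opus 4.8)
The plan is to reduce the statement entirely to Theorem~\ref{T:Theo2}, which already guarantees that the two coverings $\mathbf{C}$ and $reduct(\mathbf{C})$ induce exactly the same sixth-type lower approximation operator: for every $X\subseteq U$ we have $XL_{reduct(\mathbf{C})}(X)=XL_{\mathbf{C}}(X)$. Since the defining condition for membership in each fixed point set is a condition on precisely this operator, equality of the operators should force equality of the fixed point sets. I would therefore prove the proposition by a straightforward double-inclusion argument, translating the fixed-point condition through Theorem~\ref{T:Theo2}.

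First I would take an arbitrary $X\in\mathcal{P}_{\mathbf{C}}$, which by definition means $XL_{\mathbf{C}}(X)=X$. Applying Theorem~\ref{T:Theo2} to this same $X$ gives $XL_{reduct(\mathbf{C})}(X)=XL_{\mathbf{C}}(X)=X$, so $X$ satisfies the defining condition of $\mathcal{P}_{reduct(\mathbf{C})}$, and hence $X\in\mathcal{P}_{reduct(\mathbf{C})}$. This establishes $\mathcal{P}_{\mathbf{C}}\subseteq\mathcal{P}_{reduct(\mathbf{C})}$. The reverse inclusion is completely symmetric: starting from $X\in\mathcal{P}_{reduct(\mathbf{C})}$, i.e. $XL_{reduct(\mathbf{C})}(X)=X$, the same equality of operators yields $XL_{\mathbf{C}}(X)=XL_{reduct(\mathbf{C})}(X)=X$, so $X\in\mathcal{P}_{\mathbf{C}}$, giving $\mathcal{P}_{reduct(\mathbf{C})}\subseteq\mathcal{P}_{\mathbf{C}}$. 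Combining the two inclusions yields $\mathcal{P}_{\mathbf{C}}=\mathcal{P}_{reduct(\mathbf{C})}$.

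I do not expect any genuine obstacle here, since all the substantive work—showing that reduction of a covering leaves the neighborhood-based lower approximation unchanged—is already carried out in Theorem~\ref{T:Theo2}. The only point requiring care is that Theorem~\ref{T:Theo2} is quantified over \emph{all} $X\subseteq U$; this is exactly what is needed, because membership in either fixed point set is a pointwise condition on a single $X$, so the universally quantified operator identity transfers membership in both directions. Thus the proposition is an immediate corollary, and the proof amounts to unwinding the definition of $\mathcal{P}_{(\cdot)}$ and substituting the operator equality.
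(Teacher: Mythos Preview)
Your proposal is correct and takes essentially the same approach as the paper: both arguments reduce the proposition immediately to Theorem~\ref{T:Theo2}, observing that since $XL_{\mathbf{C}}(X)=XL_{reduct(\mathbf{C})}(X)$ for every $X\subseteq U$, the two fixed point sets have identical defining conditions and hence coincide. The paper states this in one line without spelling out the double inclusion, but the content is the same.
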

\begin{proof}
According to Definition~\ref{D:DefnNeighborhoodApproximation}, $\mathcal{P}_{\mathbf{C}}=\{X\subseteq U|XL_{\mathbf{C}}(X)=X\}$ and $\mathcal{P}_{reduct(\mathbf{C})}=\{X\subseteq U|XL_{reduct(\mathbf{C})}(X)=X\}.$
According to Theorem~\ref{T:Theo2}, $XL_{reduct(\mathbf{C})}(X)=XL_{\mathbf{C}}(X)$ for any $X\subseteq U,$
Thus $\mathcal{P}_{\mathbf{C}}=\mathcal{P}_{reduct(\mathbf{C})}.$
\end{proof}

For any covering $\mathbf{C}$ of $U$, the fixed point set of neighborhoods together with the set inclusion, 
 $\langle \mathcal{P}, \subseteq \rangle,$ is a partially ordered set.

The following proposition presents an equivalent characterization of the element of the fixed point set of neighborhoods.
\begin{prop}
$X\in \mathcal{P}$ iff $X=\cup_{x\in X} N(x)$.
\end{prop}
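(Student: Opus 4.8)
The plan is to prove the biconditional by establishing each set inclusion separately, relying on the single elementary observation that $x \in N(x)$ for every $x \in U$. This holds because $x$ belongs to every $K \in \mathbf{C}$ with $x \in K$, and hence to their intersection $N(x) = \cap\{K \in \mathbf{C} \mid x \in K\}$. This fact is the only nontrivial ingredient, and it drives both directions.

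For the forward direction, I would assume $X \in \mathcal{P}$, i.e. $XL(X) = X$. Unfolding the definition of the sixth type lower approximation in Definition~\ref{D:DefnNeighborhoodApproximation}, membership $x \in X = XL(X)$ is equivalent to $N(x) \subseteq X$. Thus every $x \in X$ satisfies $N(x) \subseteq X$, which immediately yields $\cup_{x \in X} N(x) \subseteq X$. The reverse inclusion $X \subseteq \cup_{x \in X} N(x)$ is forced by $x \in N(x)$, and the two together give $X = \cup_{x \in X} N(x)$.

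For the backward direction, I would assume $X = \cup_{x \in X} N(x)$ and show $XL(X) = X$. One inclusion, $XL(X) \subseteq X$, is free from Proposition~\ref{P:Prop3}$(3)$. For the reverse inclusion $X \subseteq XL(X)$, I take an arbitrary $x \in X$; then $N(x) \subseteq \cup_{y \in X} N(y) = X$, so by the definition of $XL$ we have $x \in XL(X)$. Combining the inclusions gives $XL(X) = X$, i.e. $X \in \mathcal{P}$.

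There is no serious obstacle here: the entire argument rests on $x \in N(x)$ together with the already-established contraction property of $XL$ in Proposition~\ref{P:Prop3}. The only point requiring minimal care is to keep the two set inclusions separate in each direction rather than conflating them, and to note explicitly that $x \in N(x)$ is what supplies the ``easy'' inclusion in each case.
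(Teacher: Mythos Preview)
Your proof is correct and follows essentially the same approach as the paper's own proof. The only cosmetic difference is in the backward direction: where you invoke Proposition~\ref{P:Prop3}(3) to get $XL(X)\subseteq X$, the paper instead argues directly that any $y\notin X$ satisfies $y\in N(y)\nsubseteq X$ and hence $y\notin XL(X)$; both routes are equivalent and rest on the same observation that $x\in N(x)$.
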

\begin{proof}
For any $X\in \mathcal{P},$ $XL(X)=X.$ Since $XL(X)=\{x\in U|N(x)\subseteq X\}=X$, $N(x)\subseteq X$ for all $x\in X.$ Therefore $\cup_{x\in X} N(x)\subseteq X$. Since $x\in N(x),$ then $X\subseteq \cup_{x\in X} N(x)$. so $X=\cup_{x\in X} N(x).$ Conversely, if $\cup_{x\in X} N(x)= X$, then $N(x)\subseteq X$ for all $x\in X$. For any $y\notin X,$ $y\in N(y)\nsubseteq X$. Hence $XL(X)=\{x\in U|N(x)\subseteq X\}=X$, i.e., $X\in \mathcal{P}.$
\end{proof}

As we know, $\langle \mathcal{P}, \subseteq \rangle$ is a partially ordered set. Naturally we consider that whether this partially ordered set is a lattice. In the following, we investigate lattice structures of this partially ordered set.
\begin{theo}\label{T:Lattice1}
$\langle \mathcal{P}, \subseteq \rangle$ is a lattice, where 
 $X\vee Y =X\cup Y$ and 
  $X\wedge Y= X\cap Y$ for any $X,Y\in \mathcal{P}.$
\end{theo}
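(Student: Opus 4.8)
The plan is to verify the two defining conditions of Definition~\ref{D:Defnlattice}: that every pair $X, Y \in \mathcal{P}$ admits a least upper bound and a greatest lower bound inside $\mathcal{P}$, and that these coincide with $X \cup Y$ and $X \cap Y$ respectively. The key observation is that $\langle \mathcal{P}, \subseteq \rangle$ is a sub-poset of the full powerset $\langle 2^{U}, \subseteq \rangle$, in which $X \cup Y$ is always the least upper bound and $X \cap Y$ always the greatest lower bound. Hence the entire problem reduces to showing that $\mathcal{P}$ is closed under finite unions and finite intersections: once $X \cup Y \in \mathcal{P}$, it is automatically the supremum of $X$ and $Y$ within $\mathcal{P}$ (any common upper bound $Z \in \mathcal{P}$ satisfies $X \cup Y \subseteq Z$), and dually for $X \cap Y$. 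Throughout I would lean on the equivalent characterization proved just above, namely that $X \in \mathcal{P}$ if and only if $N(x) \subseteq X$ for every $x \in X$ (equivalently $X = \cup_{x \in X} N(x)$), together with the idempotence and monotonicity of $XL$ recorded in Proposition~\ref{P:Prop3}.

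For closure under union I would argue directly from Proposition~\ref{P:Prop3}. Since $X \subseteq X \cup Y$ and $Y \subseteq X \cup Y$, monotonicity (property $(5)$) gives $X = XL(X) \subseteq XL(X \cup Y)$ and $Y = XL(Y) \subseteq XL(X \cup Y)$, whence $X \cup Y \subseteq XL(X \cup Y)$. Combined with $XL(X \cup Y) \subseteq X \cup Y$ (property $(3)$), this yields $XL(X \cup Y) = X \cup Y$, so $X \cup Y \in \mathcal{P}$. This part is essentially immediate.

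Closure under intersection is the step that genuinely uses the pointwise form of the characterization and is where I expect the only real care to be needed. Take any $z \in X \cap Y$. Because $X \in \mathcal{P}$ we have $N(z) \subseteq X$, and because $Y \in \mathcal{P}$ we have $N(z) \subseteq Y$; therefore $N(z) \subseteq X \cap Y$. As $z$ was arbitrary, every element of $X \cap Y$ has its neighborhood contained in $X \cap Y$, which is exactly the condition $X \cap Y \in \mathcal{P}$ (equivalently $\cup_{z \in X \cap Y} N(z) \subseteq X \cap Y$, while $z \in N(z)$ gives the reverse inclusion, so $X \cap Y = \cup_{z \in X \cap Y} N(z)$). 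A naive attempt that manipulates only the global identity $X = \cup_{x \in X} N(x)$ without first extracting the pointwise inclusion $N(x) \subseteq X$ would stall here, so isolating that inclusion is the crux.

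With both closure properties in hand, the least upper bound and greatest lower bound of any $X, Y \in \mathcal{P}$ exist in $\mathcal{P}$ and equal $X \cup Y$ and $X \cap Y$, so by Definition~\ref{D:Defnlattice} the structure $\langle \mathcal{P}, \subseteq \rangle$ is a lattice with $X \vee Y = X \cup Y$ and $X \wedge Y = X \cap Y$, as claimed.
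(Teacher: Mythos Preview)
Your proposal is correct and follows essentially the same approach as the paper: both reduce the claim to showing that $\mathcal{P}$ is closed under $\cup$ and $\cap$, with the lattice operations then inherited from $2^{U}$. The only minor difference is stylistic: the paper argues both closures by contradiction via the pointwise criterion $N(x)\subseteq X$, whereas you handle the union case directly using monotonicity of $XL$ from Proposition~\ref{P:Prop3} and the intersection case directly via the pointwise criterion---a slightly cleaner presentation of the same idea.
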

\begin{proof}
For any $X, Y\in\mathcal{P},$ if $X\cup Y\notin \mathcal{P},$ then there exists $x\in X\cup Y$ such that $N(x)\nsubseteq X\cup Y.$ Since $x\in X\cup Y,$ $x\in X$ or $x\in Y.$ Hence $N(x)\nsubseteq X$ or $N(x)\nsubseteq Y$, which is contradictory with $X, Y\in \mathcal{P}.$ Therefore, $X\cup Y\in \mathcal{P}.$

For any $X, Y\in\mathcal{P},$ if $X\cap Y\notin \mathcal{P},$ then there exists $y\in X\cap Y$ such that $N(x)\nsubseteq X\cap Y.$ Since $x\in X\cap Y,$ $x\in X$ and $x\in Y.$ Hence there exist three cases as follows: $(1)$ $N(y)\nsubseteq X$ and $N(y)\nsubseteq Y$, $(2)$ $N(y)\nsubseteq X$ and $N(y)\subseteq Y$, $(3)$ $N(y)\subseteq X$ and $N(y)\nsubseteq Y$. But these three cases are all contradictory with $X, Y\in \mathcal{P}.$ Therefore, $X\cap Y\in \mathcal{P}.$ Thus $\langle \mathcal{P}, \subseteq \rangle$ is a lattice.
\end{proof}
\begin{remark}
$\emptyset$ and $U$ are the least and greatest elements of $\langle \mathcal{P}, \subseteq \rangle,$ respectively. Therefore, $\langle \mathcal{P}, \subseteq \rangle$ is a bounded lattice.
\end{remark}

Theorem~\ref{T:Lattice1} shows that the fixed point set of neighborhoods together with the set inclusion is a lattice,
and for any two elements of the fixed point set, the least upper bound is the join of these two elements and the greatest lower
bound is the intersection of these two elements.
In fact, $\langle \mathcal{P}, \cap, \cup \rangle$
is defined from the viewpoint of algebra
and $\langle \mathcal{P}, \subseteq \rangle$
is defined from the viewpoint of partially ordered set.
Both of them are lattices. 
Therefore, we no longer
 differentiate $\langle \mathcal{P}, \cap, \cup \rangle$ and $\langle \mathcal{P}, \subseteq \rangle$,
 and both of them are called lattice $\mathcal{P}$.

The following proposition shows that the neighborhood of any element of the universe 
belongs to the fixed point set of neighborhoods.
\begin{prop}
Let $\mathbf{C}$ be a covering of $U$. For all $x\in U,$  $N(x)\in \mathcal{P}.$
\end{prop}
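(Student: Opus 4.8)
The plan is to reduce the statement to the equivalent characterization established just above, namely that $X\in\mathcal{P}$ if and only if $X=\cup_{y\in X}N(y)$. Applying this with $X=N(x)$, it suffices to prove the set equality $N(x)=\cup_{y\in N(x)}N(y)$.

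First I would dispose of the easy inclusion. Since every point lies in its own neighborhood, $y\in N(y)$ for each $y\in U$, so each $y\in N(x)$ already witnesses $y\in\cup_{y\in N(x)}N(y)$; this gives $N(x)\subseteq\cup_{y\in N(x)}N(y)$ at once.

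The substance of the argument is the reverse inclusion, which reduces to the nesting property that $y\in N(x)$ implies $N(y)\subseteq N(x)$. To establish this I would unfold the definition $N(x)=\cap\{K\in\mathbf{C}\mid x\in K\}$. For any $K\in\mathbf{C}$ with $x\in K$ we have $N(x)\subseteq K$, since $N(x)$ is by definition the intersection of all such $K$; and because $y\in N(x)$, this forces $y\in K$. Hence every member of $\{K\in\mathbf{C}\mid x\in K\}$ also belongs to $\{K\in\mathbf{C}\mid y\in K\}$, so the index family defining $N(y)$ contains the one defining $N(x)$. Intersecting over the larger family yields the smaller set, whence $N(y)=\cap\{K\mid y\in K\}\subseteq\cap\{K\mid x\in K\}=N(x)$. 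Taking the union over all $y\in N(x)$ then gives $\cup_{y\in N(x)}N(y)\subseteq N(x)$, which combined with the easy inclusion yields the desired equality, and therefore $N(x)\in\mathcal{P}$.

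I expect the only delicate point to be the containment $\{K\mid x\in K\}\subseteq\{K\mid y\in K\}$ of the index families, which is where the whole argument turns; everything else is routine. An alternative presentation that bypasses the characterization would verify $XL(N(x))=N(x)$ directly: the inclusion $XL(N(x))\subseteq N(x)$ is instance $(3)$ of Proposition~\ref{P:Prop3}, while the reverse inclusion $N(x)\subseteq XL(N(x))=\{z\in U\mid N(z)\subseteq N(x)\}$ is precisely the same nesting lemma. Either route rests on the single observation that neighborhoods are nested along the membership relation.
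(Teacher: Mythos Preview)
Your proposal is correct, and the paper's proof is precisely your ``alternative presentation'': it asserts the nesting property $N(y)\subseteq N(x)$ for $y\in N(x)$ (without unpacking the index-family argument as you do), deduces $N(x)\subseteq XL(N(x))$, and invokes Proposition~\ref{P:Prop3}(3) for the reverse inclusion. Your primary route via the characterization $X=\cup_{y\in X}N(y)$ is a minor repackaging of the same nesting lemma, so the two arguments are essentially identical.
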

\begin{proof}
For any $y\in N(x)$, $N(y)\subseteq N(x),$ which implies $y\in \{z|N(z)\subseteq N(x)\}=XL(N(x)).$ Hence $N(x)\subseteq XL(N(x)).$ According to Proposition~\ref{P:Prop3}, $XL(N(x))\subseteq N(x).$ Thus $XL(N(x))=N(x),$ i.e., $N(x)\in \mathcal{P}.$
\end{proof}

The following proposition points out that the neighborhood of any element of the universe 
is a join-irreducible element of the fixed point set of neighborhoods.
\begin{prop}
Let $\mathbf{C}$ be a covering of $U$. For any $x\in U,$ $N(x)$ is a join-irreducible element of the lattice $\mathcal{P}$.
\end{prop}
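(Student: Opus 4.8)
The plan is to exploit the fact, established in Theorem~\ref{T:Lattice1}, that the join in $\mathcal{P}$ is ordinary set union. Thus proving $N(x)$ join-irreducible reduces to showing that whenever $N(x)=A\cup B$ with $A,B\in\mathcal{P}$, one of $A$ or $B$ must already equal $N(x)$. The single ingredient that drives the whole argument is a closure property of the members of $\mathcal{P}$: for any $A\in\mathcal{P}$ and any point $z$, membership $z\in A$ forces $N(z)\subseteq A$. This is immediate from $XL(A)=A$, since $XL(A)=\{z\mid N(z)\subseteq A\}$, so $z\in A=XL(A)$ says exactly that $N(z)\subseteq A$.

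First I would record two facts. We have $x\in N(x)$, which follows directly from $N(x)=\cap\{K\in\mathbf{C}\mid x\in K\}$ because $x$ lies in every set being intersected; and $N(x)\in\mathcal{P}$ by the previous proposition, so $N(x)$ is genuinely an element of the lattice whose irreducibility we are testing. Next, assume $N(x)=A\cup B$ with $A,B\in\mathcal{P}$. Since $x\in N(x)=A\cup B$, we have $x\in A$ or $x\in B$; without loss of generality take $x\in A$. Applying the closure property to $A$ gives $N(x)\subseteq A$, while the decomposition gives $A\subseteq A\cup B=N(x)$; the two inclusions together yield $A=N(x)$. The symmetric case $x\in B$ yields $B=N(x)$ in the same way. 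Hence $N(x)=A\vee B$ implies $N(x)=A$ or $N(x)=B$, which is precisely the definition of join-irreducibility, so $N(x)\in\mathcal{J}(\mathcal{P})$.

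I do not expect a genuine obstacle here, as the argument closes in a single line of inclusions once the setup is right. The only point requiring care is to invoke the correct reading of an element of $\mathcal{P}$ — namely that a fixed point of $XL$ is closed under taking the neighborhoods of its own points — and to keep in mind that the lattice join is literally union, so that a decomposition $N(x)=A\vee B$ is the same as the set-theoretic equation $N(x)=A\cup B$. With those two observations fixed, the membership of $x$ on one side immediately pins that side to all of $N(x)$.
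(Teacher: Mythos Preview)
Your proposal is correct and follows essentially the same argument as the paper's own proof: assume $N(x)=A\cup B$ with $A,B\in\mathcal{P}$, use $x\in N(x)$ to place $x$ in one of the pieces, and then invoke the closure property of fixed points of $XL$ to force that piece to contain (and hence equal) $N(x)$. The only differences are cosmetic---you spell out the closure property and the fact $x\in N(x)$ more explicitly than the paper does.
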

\begin{proof}
 Suppose there exist $X, Y\in \mathcal{P}$ such that $N(x)=X\cup Y.$ Since $x\in N(x),$ $x\in X\cup Y.$ Therefore, $x\in X$ or $x\in Y.$ Moreover, since $X, Y\in \mathcal{P},$ then $N(x)\subseteq X\subseteq X\cup Y=N(x)$ or $N(x)\subseteq Y\subseteq X\cup Y=N(x).$ Therefore, $N(x)=X$ or $N(x)=Y.$
 Thus $N(x)$ is a join-irreducible element of the lattice $\mathcal{P}$ for all $x\in U.$
\end{proof}

According to Theorem~\ref{T:Lattice1}, the fixed point set of neighborhoods induced by any covering is a lattice. In fact, it is also a complete lattice.
\begin{theo}
Let $\mathbf{C}$ be a covering of $U$. 
$\mathcal{P}$ is a complete lattice.
\end{theo}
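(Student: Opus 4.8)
The plan is to show that $\langle \mathcal{P}, \subseteq \rangle$ is closed under arbitrary unions and arbitrary intersections, which by the definition of complete lattice is exactly what is required. Since we already know from Theorem~\ref{T:Lattice1} that $\mathcal{P}$ is a bounded lattice with least element $\emptyset$ and greatest element $U$, the empty family poses no difficulty ($\bigvee \emptyset = \emptyset$ and $\bigwedge \emptyset = U$), so I would concentrate on an arbitrary nonempty family $\{X_i\}_{i\in I}\subseteq \mathcal{P}$.

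First I would verify that $\bigcup_{i\in I} X_i \in \mathcal{P}$, and that it is the least upper bound. The cleanest route is to use the characterization $X\in\mathcal{P}$ iff $X=\bigcup_{x\in X}N(x)$ established in the earlier proposition. Suppose $x\in\bigcup_{i\in I}X_i$; then $x\in X_j$ for some $j$, and since $X_j\in\mathcal{P}$ we have $N(x)\subseteq X_j\subseteq\bigcup_{i\in I}X_i$. Hence $N(x)\subseteq\bigcup_{i\in I}X_i$ for every $x$ in the union, which gives $XL(\bigcup_{i\in I}X_i)=\bigcup_{i\in I}X_i$, i.e. the union is a fixed point. That $\bigcup_{i\in I}X_i$ is the supremum in $\langle\mathcal{P},\subseteq\rangle$ follows because set-theoretic union is already the least upper bound in the powerset order and the union lies in $\mathcal{P}$.

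Next I would treat the intersection $\bigcap_{i\in I}X_i$ by the same device. If $x\in\bigcap_{i\in I}X_i$, then $x\in X_i$ for every $i$, so $N(x)\subseteq X_i$ for every $i$ (each $X_i$ being a fixed point), whence $N(x)\subseteq\bigcap_{i\in I}X_i$. Thus $\bigcap_{i\in I}X_i\in\mathcal{P}$, and since intersection is the greatest lower bound in the powerset, it is the infimum inside $\mathcal{P}$ as well. Combining the two closure statements shows that every subset of $\mathcal{P}$ has both a join and a meet in $\mathcal{P}$, so $\mathcal{P}$ is complete.

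I do not expect any genuine obstacle here: the whole argument is an arbitrary-index generalization of the binary case already proved in Theorem~\ref{T:Lattice1}, and the fixed-point characterization makes both closure checks essentially immediate. The only point that deserves a word of care is the empty-family case, which is why I would explicitly invoke the bounded-lattice remark to supply $\bigvee\emptyset=\emptyset$ and $\bigwedge\emptyset=U$; beyond that, the proof is a direct verification.
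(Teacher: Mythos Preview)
Your proposal is correct and follows essentially the same approach as the paper: both proofs verify that an arbitrary family $\mathcal{S}\subseteq\mathcal{P}$ has $\bigcup\mathcal{S}\in\mathcal{P}$ and $\bigcap\mathcal{S}\in\mathcal{P}$ by checking that $N(x)$ is contained in the union (resp.\ intersection) for every $x$ belonging to it. The only differences are cosmetic: you argue directly while the paper argues by contradiction, and you are a bit more careful in explicitly handling the empty family and in remarking that the set-theoretic union and intersection then serve as supremum and infimum in $\langle\mathcal{P},\subseteq\rangle$.
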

\begin{proof}
For any $\mathcal{S}\subseteq \mathcal{P},$ we need to prove that $\cap \mathcal{S}\in \mathcal{P}$ and $\cup \mathcal{S}\in \mathcal{P}.$

If $\cap \mathcal{S}\notin \mathcal{P},$ then there exists $y\in \cap \mathcal{S}$ such that $N(y)\nsubseteq\cap \mathcal{S},$ i.e., there are two index sets $I, J\subseteq \{1,2,\cdots, |\mathcal{S}|\}$ with $I\cap J=\emptyset$ and $|I\cup J|=|\mathcal{S}|$ such that $N(y)\nsubseteq X_{i}$ and $N(y)\subseteq X_{j}$ for any $i\in I, j\in J,$ where $X_{i}, X_{j}\in \mathcal{S}.$ This is contradictory with $X_{i}(i\in I), X_{j}(j\in J)\in \mathcal{P}.$ Hence $\cap \mathcal{S}\in \mathcal{P}.$

If $\cup \mathcal{S}\notin \mathcal{P},$ then there exists $x\in \cup \mathcal{S}$ such that $N(x)\nsubseteq\cup \mathcal{S},$ i.e., there exists $X\in \mathcal{S}$ such that $x\in X$ and $N(x)\nsubseteq X,$ which is contradictory with $X\in \mathcal{P}.$ Hence $\cup \mathcal{S}\in \mathcal{P}.$
\end{proof}

The following theorem shows that the fixed point set of neighborhoods induced by any covering is a distributive lattice.

\begin{theo}\label{T:Distributive1}
Let $\mathbf{C}$ be a covering of $U$. 
$\mathcal{P}$ is a distributive lattice.
\end{theo}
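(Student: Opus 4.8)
The plan is to exploit the fact, established in Theorem~\ref{T:Lattice1}, that the lattice operations on $\mathcal{P}$ are nothing but the ordinary set-theoretic union and intersection: for $X, Y \in \mathcal{P}$ we have $X \vee Y = X \cup Y$ and $X \wedge Y = X \cap Y$. Because these operations coincide with those of the power set lattice $\langle \mathcal{P}(U), \cup, \cap\rangle$, and that lattice is well known to be distributive, distributivity should transfer to $\mathcal{P}$ without any further structural work.

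Concretely, I would take arbitrary $X, Y, Z \in \mathcal{P}$ and verify the distributive identity
\[
X \wedge (Y \vee Z) = (X \wedge Y) \vee (X \wedge Z).
\]
By Theorem~\ref{T:Lattice1} the left-hand side equals $X \cap (Y \cup Z)$ and the right-hand side equals $(X \cap Y) \cup (X \cap Z)$. These two sets are equal by the ordinary distributive law for sets, which holds for all subsets of $U$ irrespective of membership in $\mathcal{P}$. Hence the identity holds for every triple drawn from $\mathcal{P}$, which is exactly the definition of a distributive lattice.

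The only point that genuinely requires care, and it is already discharged by Theorem~\ref{T:Lattice1}, is the closure of $\mathcal{P}$ under union and intersection, i.e. that $X \cup Y$ and $X \cap Y$ again lie in $\mathcal{P}$, so that $\mathcal{P}$ is a \emph{sublattice} of $\langle \mathcal{P}(U), \cup, \cap\rangle$ rather than merely a subposet whose joins and meets might differ from $\cup$ and $\cap$. Once closure is known, no separate obstacle remains: distributivity is inherited automatically, since every sublattice of a distributive lattice is distributive. I therefore expect this theorem to be an essentially immediate consequence of Theorem~\ref{T:Lattice1} together with the distributivity of the power set, and the proof should be short.
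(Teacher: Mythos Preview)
Your proposal is correct and matches the paper's own proof essentially verbatim: the paper also simply observes that for $X,Y,Z\in\mathcal{P}$ one has $X\cup(Y\cap Z)=(X\cup Y)\cap(X\cup Z)$ and $X\cap(Y\cup Z)=(X\cap Y)\cup(X\cap Z)$ as subsets of $U$, relying implicitly on Theorem~\ref{T:Lattice1} for the identification $\vee=\cup$, $\wedge=\cap$. Your added remark that $\mathcal{P}$ is a sublattice of the power set makes explicit what the paper leaves tacit, but the argument is the same.
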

\begin{proof}
For any $X, Y, Z\in \mathcal{P}$, $X, Y, Z\subseteq U.$  It is straightforward that $X\cup(Y\cap Z)=(X\cup Y)\cap(X\cup Z), X\cap(Y\cup Z)=(X\cap Y)\cup(X\cap Z).$ Hence $\mathcal{P}$ is a distributive lattice.
\end{proof}

The fixed point set of neighborhoods induced by any covering is both a pseudocomplemented and a dual pseudocomplemented lattice. That is to say any element of the fixed point set of neighborhoods has a pseudocomplement and a dual pseudocomplement. For any element, its pseudocomplement is the lower approximation of its complement and dual pseudocomplement is the union of all the neighborhood of its complement.

\begin{theo}\label{T:Stone1}
Let $\mathbf{C}$ be a covering of $U$. Then:\\
$(1)$ 
$\mathcal{P}$ is a pseudocomplemented lattice, and $X^{\ast}=XL(X^{c})$ for any $X\in \mathcal{P};$\\
$(2)$ 
$\mathcal{P}$ is a dual pseudocomplemented lattice, and $X^{+}=\cup_{x\in X^{c}}N(x)$ for any $X\in \mathcal{P}.$\\
Where $X^{c}$ is the complement of $X$ in $U$.
\end{theo}
\begin{proof}
$(1)$ For any $X\in \mathcal{P}$, according to Proposition~\ref{P:Prop3}, $XL(XL(X^{c}))=XL(X^{c})$,
then $XL(X^{c})\in \mathcal{P}.$ According to Proposition~\ref{P:Prop3}, $XL(X^{c})\subseteq X^{c}$. Hence $X\cap XL(X^{c})=\emptyset.$
For any $Y\in \mathcal{P},$ if $X\cap Y=\emptyset,$ then $Y\subseteq X^{c}.$ According to Proposition~\ref{P:Prop3}, $Y=XL(Y)\subseteq XL(X^{c}).$
Therefore,  $X^{\ast}=XL(X^{c})$ for any $X\in \mathcal{P},$ i.e.,
$\mathcal{P}$ is a pseudocomplemented lattice.

$(2)$ First, we prove $\cup_{x\in X^{c}}N(x)\in\mathcal{P}$ for any $X\in \mathcal{P}.$ For any $y\in \cup_{x\in X^{c}}N(x),$
there exists $z\in  X^{c}$ such that $y\in N(z).$ Thus $N(y)\subseteq N(z),$ i.e., $N(y)\subseteq \cup_{x\in X^{c}}N(x).$ Therefore, $y\in XL(\cup_{x\in X^{c}}N(x)),$ i.e., $\cup_{x\in X^{c}}N(x)\subseteq XL(\cup_{x\in X^{c}}N(x)).$ According to Proposition~\ref{P:Prop3}, $XL(\cup_{x\in X^{c}}N(x))\subseteq\cup_{x\in X^{c}}N(x).$ Consequently, $XL(\cup_{x\in X^{c}}N(x))=\cup_{x\in X^{c}}N(x),$ i.e., $\cup_{x\in X^{c}}N(x)\in\mathcal{P}.$

It is straightforward that $X\cup(\cup_{x\in X^{c}}N(x))=U.$

Second, we need to prove that for any $Y\in \mathcal{P},$ if $X\cup Y=U,$ then $\cup_{x\in X^{c}}N(x)\subseteq Y.$
 The following two cases are used to prove it. Case 1: If $\cup_{x\in X^{c}}N(x)=X^{c},$ then $\cup_{x\in X^{c}}N(x)\subseteq Y.$
 Case 2: If $X^{c}\subset\cup_{x\in X^{c}}N(x),$ then $X^{c}\subset Y.$
 Suppose $Y\subset\cup_{x\in X^{c}}N(x),$
 then there exists $y\in \cup_{x\in X^{c}}N(x)$ such that $y\notin Y,$ so $y\notin X^{c},$ which implies there exists $z\in X^{c},$
 such that $y\in N(z).$ Since $X^{c}\subset Y,$ $z\in Y.$ So $N(z)\nsubseteq Y,$ i.e.,$z\notin XL(Y).$ In other words, $XL(Y)\neq Y,$
 which is contradictory with $Y\in \mathcal{P}.$ Hence $\cup_{x\in X^{c}}N(x)\subseteq Y.$
 Consequently, $X^{+}=\cup_{x\in X^{c}}N(x)$ for any $X\in \mathcal{P},$ i.e.,
$\mathcal{P}$ is a dual pseudocomplemented lattice.
\end{proof}

According to Theorem~\ref{T:Stone1}, Definition~\ref{D:Stone} and Definition~\ref{D:DualStone}, the fixed point set of neighborhoods induced by any covering is both a pseudocomplemented and a dual pseudocomplemented lattice. Moreover, according to Definition~\ref{D:Palgebra}, it is a double $p-$algebra.
\begin{remark}
Generally, the fixed point set of neighborhoods neither a Stone algebra nor a dual Stone algebra. 
\end{remark}
\begin{exa}\label{E:Example1}
Let $U=\{1,2,3,4\}$ and $\mathbf{C}=\{\{1,2,3\},\{1\},\{1,3,4\},\{2,3\}\}.$
Then $N(1)=\{1\}, N(2)=\{2,3\}, N(3)=\{3\}, N(4)=\{1,3,4\}.$ If $X=\{2, 3\},$
then $X^{\ast}=XL(X^{c})=\{1\}, X^{\ast\ast}=XL((X^{\ast})^{c})=\{2,3\},$ i.e.,
$X^{\ast}\cup X^{\ast\ast}\neq U.$ Therefore,
$\mathcal{P}$ is not a Stone algebra. Similarly,
$X^{+}=\cup_{x\in X^{c}}N(x)=\{1,3,4\}, X^{++}=\cup_{y\in (X^{+})^{c}}N(y)=\{2,3\},$
i.e., $X^{+}\cap X^{++}\neq \emptyset.$ Thus
$\mathcal{P}$ is not a dual Stone algebra.
\end{exa}

According to Example~\ref{E:Example1}, the fixed point set of neighborhoods induced by any covering is not always a double Stone algebra. In the following, we study under what conditions that the fixed point set of neighborhoods induced by a covering is a boolean lattice and a double Stone algebra, respectively.
\begin{theo}\label{T:Boolean1}
If $\{N(x)|x\in U\}$ is a partition of $U,$ then 
$\mathcal{P}$ is a boolean lattice.
\end{theo}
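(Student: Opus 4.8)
The plan is to reduce the claim to the single property that $\mathcal{P}$ is closed under set-theoretic complementation, since everything else is already in place. By Theorem~\ref{T:Distributive1}, $\mathcal{P}$ is a distributive lattice, and by the remark following Theorem~\ref{T:Lattice1} it is bounded with least element $\emptyset$ and greatest element $U$. By the definition of a boolean lattice it therefore suffices to show that $\mathcal{P}$ is complemented, and the natural candidate for the complement of $X\in\mathcal{P}$ is its set-theoretic complement $X^{c}$: the equalities $X\cup X^{c}=U$ and $X\cap X^{c}=\emptyset$ hold automatically, so the whole argument collapses to verifying $X^{c}\in\mathcal{P}$.

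The key observation, and the only place where the partition hypothesis enters, is the following consequence of $\{N(x)\mid x\in U\}$ being a partition: whenever $z\in N(y)$ we must have $N(z)=N(y)$. This holds because $z\in N(z)\cap N(y)$ shows that the two blocks $N(z)$ and $N(y)$ of the partition meet, and distinct blocks of a partition are disjoint, so they must coincide. In other words, the partition hypothesis supplies the symmetry that the neighborhood relation generally lacks.

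With this in hand I would argue $X^{c}\in\mathcal{P}$ by contradiction. Fix $X\in\mathcal{P}$ and suppose $X^{c}\notin\mathcal{P}$; then there is $y\in X^{c}$ with $N(y)\nsubseteq X^{c}$, so some $z\in N(y)$ lies in $X$. By the key observation $N(z)=N(y)$, and since $z\in X\in\mathcal{P}$ we have $N(y)=N(z)\subseteq X$. But then $y\in N(y)\subseteq X$ contradicts $y\in X^{c}$. Hence $N(y)\subseteq X^{c}$ for every $y\in X^{c}$, which is exactly $X^{c}\in\mathcal{P}$.

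Finally I would assemble the pieces: $X^{c}$ is a complement of $X$ in $\mathcal{P}$, so every element of $\mathcal{P}$ has a complement and $\mathcal{P}$ is complemented; being also distributive, $\mathcal{P}$ is a boolean lattice. I expect the closure step $X^{c}\in\mathcal{P}$ to be the main obstacle, with the rest being bookkeeping. It is worth emphasizing that this step genuinely needs the partition assumption: Example~\ref{E:Example1} shows that for a general covering $\mathcal{P}$ need not even be a Stone algebra, let alone boolean, so the proof should make clear that the implication $z\in N(y)\Rightarrow N(z)=N(y)$ is precisely what fails in the absence of a partition.
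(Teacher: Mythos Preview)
Your proposal is correct and follows essentially the same strategy as the paper: reduce to showing $X^{c}\in\mathcal{P}$, then run a contradiction using the partition hypothesis to pass between neighborhoods. The only cosmetic difference is a relabeling---the paper picks $y\in X$, $z\in X^{c}$ with $y\in N(z)$ and uses the partition to get $z\in N(y)\nsubseteq X$, whereas you pick $y\in X^{c}$, $z\in X$ with $z\in N(y)$ and use the (slightly stronger but equally immediate) consequence $N(z)=N(y)$ to force $y\in X$; your explicit isolation of the implication $z\in N(y)\Rightarrow N(z)=N(y)$ as the ``key observation'' is a nice touch that the paper leaves implicit.
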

\begin{proof}
According to Theorem~\ref{T:Distributive1},
 $\mathcal{P}$ is a distributive lattice. Moreover,
 $\mathcal{P}$ is a bounded lattice. In the following,
 we need to prove only that 
 $\mathcal{P}$ is a complemented lattice. In other words,
 we need to prove that $X^{c}\in \mathcal{P}$ for any $X\in \mathcal{P}.$ If $X^{c}\notin \mathcal{P},$
 i.e., $\cup_{x\in X^{c}}N(x)\neq X^{c},$
 then there exists $y\in \cup_{x\in X^{c}}N(x)$ such that $y\notin X^{c}.$ Since $y\in \cup_{x\in X^{c}}N(x)$, then there exists $z\in X^{c}$ such that $y\in N(z).$
 Since $\{N(x)|x\in U\}$ is a partition of $U,$
 $z\in N(y).$
 Therefore, $N(y)\nsubseteq X,$ i.e., $\cup_{x\in X}N(x)\neq X,$ which is contradictory with $X\in \mathcal{P}.$ Hence, $X^{c}\in \mathcal{P}$ for any $X\in \mathcal{P},$ i.e.,
 $\mathcal{P}$ is a complemented lattice. Consequently,
 $\mathcal{P}$ is a boolean lattice.
\end{proof}

\begin{theo}\label{T:DoubleStone1}
If $\{N(x)|x\in U\}$ is a partition of $U,$ then
$\mathcal{P}$ is a double Stone algebra.
\end{theo}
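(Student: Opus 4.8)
The plan is to verify the two identities that upgrade a double $p-$algebra to a double Stone algebra, namely the Stone identity $X^{\ast}\vee X^{\ast\ast}=U$ and the dual Stone identity $X^{+}\wedge X^{++}=\emptyset$ for every $X\in\mathcal{P}$. By Theorem~\ref{T:Distributive1} the lattice $\mathcal{P}$ is distributive, and by Theorem~\ref{T:Stone1} it is pseudocomplemented with $X^{\ast}=XL(X^{c})$ and dual pseudocomplemented with $X^{+}=\cup_{x\in X^{c}}N(x)$. Thus $\mathcal{P}$ is already a double $p-$algebra, and only these two identities remain to be checked.

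The decisive observation is that under the partition hypothesis the complement of every fixed point is again a fixed point. This is precisely what the proof of Theorem~\ref{T:Boolean1} establishes: if $\{N(x)\mid x\in U\}$ is a partition of $U$, then $X^{c}\in\mathcal{P}$ for every $X\in\mathcal{P}$. I would reuse this fact directly. Its immediate consequence is a great simplification of the two complement operations: since $X^{c}\in\mathcal{P}$ we have $XL(X^{c})=X^{c}$ and $\cup_{x\in X^{c}}N(x)=X^{c}$, so both $X^{\ast}=X^{c}$ and $X^{+}=X^{c}$.

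With these collapsed formulas the rest is a short computation. Because $X^{c}\in\mathcal{P}$, I can apply the formulas of Theorem~\ref{T:Stone1} once more, now with $X^{c}$ in the role of $X$, to obtain $X^{\ast\ast}=(X^{c})^{\ast}=(X^{c})^{c}=X$ and $X^{++}=(X^{c})^{+}=(X^{c})^{c}=X$. Hence $X^{\ast}\vee X^{\ast\ast}=X^{c}\cup X=U$ and $X^{+}\wedge X^{++}=X^{c}\cap X=\emptyset$, which are exactly the Stone and dual Stone identities. Combining this with the distributive, pseudocomplemented, and dual pseudocomplemented structure already in hand shows that $\mathcal{P}$ is both a Stone algebra and a dual Stone algebra, hence a double Stone algebra.

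I do not expect a serious obstacle here, since the heavy lifting is done once $X^{c}\in\mathcal{P}$ is known. The only point that needs care is justifying that the pseudocomplement and dual pseudocomplement formulas of Theorem~\ref{T:Stone1} may legitimately be applied to $X^{c}$ when computing the double complements; this is valid precisely because the partition hypothesis guarantees $X^{c}\in\mathcal{P}$, so $X^{c}$ is a genuine element of the lattice on which those formulas hold.
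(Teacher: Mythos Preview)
Your proposal is correct and follows essentially the same route as the paper: both arguments establish $X^{\ast}=X^{c}=X^{+}$ (hence $X^{\ast\ast}=X=X^{++}$) and then read off the Stone and dual Stone identities. The only difference is organizational: you invoke the proof of Theorem~\ref{T:Boolean1} for the key fact $X^{c}\in\mathcal{P}$, whereas the paper re-derives that closure under complementation inline.
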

\begin{proof}
For any $X\in \mathcal{P}$, we prove $X^{\ast}=X^{c}=X^{+}.$ 
Suppose for any $y\in X^{c}$ there exists $z\in X$ such that $z\in N(y),$ i.e., $N(y)\nsubseteq X^{c}$. Since $\{N(x)|x\in U\}$ is a partition of $U,$
then $y\in N(z),$ i.e., $N(z)\nsubseteq X.$ So $z\notin XL(X),$ which is contradictory with $X\in \mathcal{P}.$ Hence $N(y)\subseteq X^{c}.$
Then $y\in XL(X^{c})$ and $\cup_{x\in X^{c}}N(x)\subseteq X^{c},$ i.e., $X^{c}\subseteq XL(X^{c}).$
 According to Proposition~\ref{P:Prop3}, $XL(X^{c})\subseteq X^{c}.$
 It is straightforward that $X^{c}\subseteq \cup_{x\in X^{c}}N(x).$
 Consequently, $X^{\ast}=XL(X^{c})=X^{c}=\cup_{x\in X^{c}}N(x)=X^{+}.$
  Since $XL(X^{c})=X^{c},$ then $X^{c}\in \mathcal{P}.$ Similarly, we can prove that $X^{\ast\ast}=X^{cc}=X=X^{++}.$
   Therefore, $X^{\ast}\cup X^{\ast\ast}=U, X^{+}\cap X^{++}=\emptyset,$ i.e.,
$\mathcal{P}$ is both a Stone and a dual Stone algebra. Consequently,
$\mathcal{P}$ is a double Stone algebra.
\end{proof}

According to Theorem~\ref{T:Boolean1} and Theorem~\ref{T:DoubleStone1}, the fixed point set of neighborhoods is both a boolean lattice and a double Stone algebra when the neighborhood forms a partition of the universe.

\section{Lattice structure of the fixed point of the lower approximations of covering}
In this section, we propose a family of sets which is consisted of the fixed points of lower approximations of the first type of covering-based rough sets
 and study the conditions when the family of sets becomes some special lattices.
\begin{defn}\label{D:DefnFixedPointOfCovering}
Let $\mathbf{C}$ be a covering of $U$. We define
\begin{center}
 $\mathcal{F}_{\mathbf{C}}=\{X\subseteq U|FL_{\mathbf{C}}(X)=X\}.$
 \end{center}
 $\mathcal{F}_{\mathbf{C}}$ is called the fixed point set of covering induced by $\mathbf{C}$.
We omit the subscript $\mathbf{C}$ when there is no confusion.
\end{defn}

Reducible element is an important concept in covering-based rough sets.
It is interesting to consider the influence of reducible elements on the fixed point set of covering.
 As shown in the following proposition, the fixed point set of covering induced by any covering $\mathbf{C}$ 
is equal to the one induced by the covering $\mathbf{C}-\{K\},$ if 
$K$ is a reducible element of $\mathbf{C}$.
\begin{prop}　　
 Suppose $\mathbf{C}$ is a covering of $U$ and $K$ is a reducible element of $\mathbf{C}$, then $\mathcal{F}_{\mathbf{C}}=\mathcal{F}_{\mathbf{C}-\{K\}}.$
\end{prop}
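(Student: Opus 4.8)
The plan is to reduce the claimed set equality to an identity between the two lower approximation operators. First I would unfold both sides using Definition~\ref{D:DefnFixedPointOfCovering}: by definition $\mathcal{F}_{\mathbf{C}}=\{X\subseteq U\mid FL_{\mathbf{C}}(X)=X\}$, and since deleting the reducible element $K$ leaves a covering of $U$ (by the proposition on reducible elements recalled in Section~2), the set $\mathcal{F}_{\mathbf{C}-\{K\}}=\{X\subseteq U\mid FL_{\mathbf{C}-\{K\}}(X)=X\}$ is well defined. Both are families of subsets of the same universe $U$, so proving their equality amounts to showing that an arbitrary $X\subseteq U$ satisfies the defining condition on the left precisely when it satisfies the one on the right.

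The key observation is that the two defining conditions are governed by the same operator. By Proposition~\ref{P:Prop4}, because $K$ is a reducible element of $\mathbf{C}$, we have $FL_{\mathbf{C}}(X)=FL_{\mathbf{C}-\{K\}}(X)$ for every $X\subseteq U$. Consequently $FL_{\mathbf{C}}(X)=X$ holds if and only if $FL_{\mathbf{C}-\{K\}}(X)=X$ holds, and therefore $X\in\mathcal{F}_{\mathbf{C}}$ if and only if $X\in\mathcal{F}_{\mathbf{C}-\{K\}}$. This yields $\mathcal{F}_{\mathbf{C}}=\mathcal{F}_{\mathbf{C}-\{K\}}$ directly.

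I do not expect any genuine obstacle here: the substantive content, namely that deleting a reducible element does not alter the first type of lower approximation, has already been isolated in Proposition~\ref{P:Prop4}, so only a one-line transfer of that identity to the level of fixed point sets remains. This mirrors the earlier proof that $\mathcal{P}_{\mathbf{C}}=\mathcal{P}_{reduct(\mathbf{C})}$, which drew in exactly the same way on Theorem~\ref{T:Theo2}. If one wished to iterate the argument to pass from $\mathbf{C}$ all the way to $reduct(\mathbf{C})$, the only additional ingredient would be Corollary~\ref{C:Cor1} (equivalently, repeated application of Proposition~\ref{P:Prop4} together with the stability of reducibility under deletion); but for the single-element statement at hand, Proposition~\ref{P:Prop4} alone suffices.
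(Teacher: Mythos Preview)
Your proposal is correct and follows essentially the same route as the paper: unfold both fixed point sets via Definition~\ref{D:DefnFixedPointOfCovering}, invoke Proposition~\ref{P:Prop4} to identify $FL_{\mathbf{C}}$ with $FL_{\mathbf{C}-\{K\}}$, and conclude. The extra remarks you add about well-definedness of $\mathcal{F}_{\mathbf{C}-\{K\}}$ and about iterating to $reduct(\mathbf{C})$ are fine context but not needed for the argument itself.
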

\begin{proof}
According to Definition~\ref{D:DefnFixedPointOfCovering}, $\mathcal{F}_{\mathbf{C}}=\{X\subseteq U|FL_{\mathbf{C}}(X)=X\}$ and $\mathcal{F}_{\mathbf{C}-\{K\}}=\{X\subseteq U|FL_{\mathbf{C}-\{K\}}(X)=X\}.$
According to Proposition~\ref{P:Prop4}, $FL_{\mathbf{C}-\{K\}}(X)=FL_{\mathbf{C}}(X)$ for any $X\subseteq U.$
Thus $\mathcal{F}_{\mathbf{C}}=\mathcal{F}_{\mathbf{C}-\{K\}}.$
\end{proof}

The fixed point set of covering induced by any covering $\mathbf{C}$ 
is equal to the one induced by the reduction of the covering. 
\begin{cor}　　
 Suppose $\mathbf{C}$ is a covering of $U$, then $\mathcal{F}_{\mathbf{C}}=\mathcal{F}_{reduct(\mathbf{C})}.$
\end{cor}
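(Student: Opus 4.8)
The plan is to derive this corollary as an immediate consequence of Corollary~\ref{C:Cor1}, which already establishes that $\mathbf{C}$ and $reduct(\mathbf{C})$ induce identical first type lower approximations on every subset of $U$. The whole argument mirrors the proof of the immediately preceding Proposition, with Corollary~\ref{C:Cor1} playing the role that Proposition~\ref{P:Prop4} played there.

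First I would unfold the definitions: by Definition~\ref{D:DefnFixedPointOfCovering} we have $\mathcal{F}_{\mathbf{C}}=\{X\subseteq U\mid FL_{\mathbf{C}}(X)=X\}$, and likewise $\mathcal{F}_{reduct(\mathbf{C})}=\{X\subseteq U\mid FL_{reduct(\mathbf{C})}(X)=X\}$. Note that $reduct(\mathbf{C})$ is itself a covering of $U$ (guaranteed by the proposition stating that deleting a reducible element leaves a covering), so $\mathcal{F}_{reduct(\mathbf{C})}$ is well defined. Next I would invoke Corollary~\ref{C:Cor1}, which gives $FL_{reduct(\mathbf{C})}(X)=FL_{\mathbf{C}}(X)$ for every $X\subseteq U$. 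Consequently, for each $X$ the condition $FL_{\mathbf{C}}(X)=X$ holds if and only if $FL_{reduct(\mathbf{C})}(X)=X$. The two defining predicates therefore select exactly the same subsets of $U$, and so $\mathcal{F}_{\mathbf{C}}=\mathcal{F}_{reduct(\mathbf{C})}$.

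There is essentially no obstacle here: all the substantive work is carried by Corollary~\ref{C:Cor1}, and the remaining step is the trivial observation that two operators which agree pointwise have identical fixed point sets. An alternative route would be to iterate the preceding Proposition along the chain of single reducible-element deletions that produces $reduct(\mathbf{C})$ (using the proposition guaranteeing that reducibility of other elements is preserved under such deletions); but since Corollary~\ref{C:Cor1} already packages that iteration, the direct route above is cleaner and avoids any induction on the finitely many deletion steps.
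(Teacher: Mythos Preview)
Your proposal is correct and follows essentially the same route as the paper: unfold Definition~\ref{D:DefnFixedPointOfCovering}, invoke Corollary~\ref{C:Cor1} to identify $FL_{\mathbf{C}}$ with $FL_{reduct(\mathbf{C})}$ pointwise, and conclude that the fixed point sets coincide. The paper's proof is just a terser version of what you wrote.
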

\begin{proof}
According to Definition~\ref{D:DefnFixedPointOfCovering}, $\mathcal{F}_{\mathbf{C}}=\{X\subseteq U|FL_{\mathbf{C}}(X)=X\}$ and $\mathcal{F}_{reduct(\mathbf{C})}=\{X\subseteq U|FL_{reduct(\mathbf{C})}(X)=X\}.$
According to Corollary~\ref{C:Cor1}, $FL_{reduct(\mathbf{C})}(X)=FL_{\mathbf{C}}(X)$ for any $X\subseteq U.$
Thus $\mathcal{F}_{\mathbf{C}}=\mathcal{F}_{reduct(\mathbf{C})}.$
\end{proof}

For any covering $\mathbf{C}$ of $U$, the fixed point set of covering together with the set inclusion,
$\langle \mathcal{F}, \subseteq \rangle,$ is a partially ordered set.
Naturally we consider that whether this partially ordered set is a lattice. The following theorem shows
that the fixed point set of covering is a lattice, and for any two elements of this lattice, the join of these two
elements is the least upper bound of this lattice and the lower approximation of the intersection of these two elements is the greatest
 lower bound of this lattice.
\begin{theo}\label{T:Lattice2}
$\langle \mathcal{F}, \subseteq \rangle$ is a lattice, where 
$X\vee Y =X\cup Y$ and 
$X\wedge Y= FL(X\cap Y)$ for any $X,Y\in \mathcal{F}.$
\end{theo}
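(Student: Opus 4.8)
The plan is to verify directly that the two proposed operations are well-defined on $\mathcal{F}$ and that they realize the least upper bound and the greatest lower bound in the poset $\langle \mathcal{F}, \subseteq \rangle$. The key tool throughout will be the properties of $FL$ collected in Proposition~\ref{P:Prop3}: that $FL$ is contractive ($FL(X)\subseteq X$), monotone ($X\subseteq Y\Rightarrow FL(X)\subseteq FL(Y)$), and idempotent ($FL(FL(X))=FL(X)$).

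First I would establish closure, namely that $X\cup Y\in\mathcal{F}$ and $FL(X\cap Y)\in\mathcal{F}$ whenever $X,Y\in\mathcal{F}$. For the join, monotonicity gives $X=FL(X)\subseteq FL(X\cup Y)$ and likewise $Y\subseteq FL(X\cup Y)$, hence $X\cup Y\subseteq FL(X\cup Y)$; combined with the contractive property $FL(X\cup Y)\subseteq X\cup Y$ this forces $FL(X\cup Y)=X\cup Y$, so $X\cup Y\in\mathcal{F}$. For the meet, idempotency immediately yields $FL(FL(X\cap Y))=FL(X\cap Y)$, so $FL(X\cap Y)\in\mathcal{F}$ with no further work.

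Next I would check that these elements are the correct bounds. That $X\cup Y$ is an upper bound is immediate, and if $Z\in\mathcal{F}$ contains both $X$ and $Y$ then $X\cup Y\subseteq Z$, so $X\cup Y$ is the least upper bound. For the meet, the contractive property gives $FL(X\cap Y)\subseteq X\cap Y\subseteq X$ and $FL(X\cap Y)\subseteq Y$, so it is a lower bound; and if $Z\in\mathcal{F}$ satisfies $Z\subseteq X$ and $Z\subseteq Y$, then $Z\subseteq X\cap Y$, whence $Z=FL(Z)\subseteq FL(X\cap Y)$ by monotonicity together with the fixed-point property of $Z$. Thus $FL(X\cap Y)$ is the greatest lower bound, and $\langle \mathcal{F}, \subseteq \rangle$ is a lattice.

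The point deserving the most care, and the genuine difference from the neighborhood case of Theorem~\ref{T:Lattice1}, is that $\mathcal{F}$ is in general \emph{not} closed under intersection: since $FL(X)$ is a union of covering blocks, an intersection of two such unions need not itself be such a union, so $X\cap Y$ may fail to be a fixed point of $FL$. This is precisely why the meet must be defined as $FL(X\cap Y)$ rather than $X\cap Y$, and why the last step of the argument, showing that any common lower bound $Z$ lands below $FL(X\cap Y)$, is exactly where the fixed-point hypothesis $FL(Z)=Z$ is indispensable.
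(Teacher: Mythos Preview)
Your proof is correct and follows the same overall plan as the paper---show that the two candidate operations return elements of $\mathcal{F}$---but with two differences worth flagging. First, for closure of $X\cup Y$ the paper argues elementwise from the concrete definition of $FL$ (if $X\cup Y\notin\mathcal{F}$, some $x\in X\cup Y$ lies in no block $K\subseteq X\cup Y$, contradicting $x\in FL(X)$ or $x\in FL(Y)$), whereas you use only the abstract properties of Proposition~\ref{P:Prop3}: monotonicity to obtain $X\cup Y\subseteq FL(X\cup Y)$ and contractiveness for the reverse inclusion. Your version is cleaner and would transfer unchanged to any contractive, monotone, idempotent operator. Second, you explicitly verify that $X\cup Y$ and $FL(X\cap Y)$ are the supremum and infimum in $\langle\mathcal{F},\subseteq\rangle$, while the paper's proof stops once closure is shown. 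That extra step is a genuine completion of the argument: closure alone does not immediately make $FL(X\cap Y)$ the \emph{greatest} lower bound, and the verification depends precisely on the fixed-point property $Z=FL(Z)$ of the competing lower bound, as you point out in your final paragraph.
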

\begin{proof}
We need to prove only that $X\cup Y\in \mathcal{F}$ and $FL(X\cap Y)\in \mathcal{F}$ for any $X, Y\in\mathcal{F}.$

For any $X, Y\in\mathcal{F},$ if $X\cup Y\notin \mathcal{F},$ then there exists $x\in X\cup Y$ such that $\forall K\in \mathbf{C}$ and $x\in K$ implies $K\nsubseteq X\cup Y.$ Since $x\in X\cup Y,$ $x\in X$ or $x\in Y.$ Hence $K\nsubseteq X$ or $K\nsubseteq Y$, i.e., $x\notin FL(X)$ or $x\notin FL(Y),$ which are contradictory with $X, Y\in \mathcal{F}.$ Therefore, $X\cup Y\in \mathcal{F}.$

For any $X, Y\in\mathcal{F},$  $X\cap Y\subseteq U.$
 According to Proposition~\ref{P:Prop3}, $FL(FL(X\cap Y))=FL(X\cap Y).$ Hence $FL(X\cap Y)\in \mathcal{F}.$ Thus $\langle \mathcal{F}, \subseteq \rangle$ is a lattice.
\end{proof}

Theorem~\ref{T:Lattice2} shows that the fixed point set of covering together with the set inclusion is a lattice, and for any two elements of the fixed point set of covering,
the least upper bound is the join of these two elements and the greatest lower bound is the 
lower approximation of the intersection of these two elements.
In fact, $\langle \mathcal{F}, \wedge, \vee \rangle$
is defined from the viewpoint of algebra
and $\langle \mathcal{F}, \subseteq \rangle$ is
defined from the viewpoint of  partially ordered set.
Both of them are lattices. 
Therefore, we no longer
 differentiate $\langle \mathcal{F}, \wedge, \vee \rangle$ and $\langle \mathcal{F}, \subseteq \rangle$,
 and both of them are called lattice $\mathcal{F}$.
\begin{remark}
$\emptyset$ and $U$ are the least and greatest elements of $\mathcal{F},$ respectively. Therefore, $ \mathcal{F}$ is a bounded lattice.
\end{remark}

The following proposition points out that each irreducible element of a covering  
is a join-irreducible element of the fixed point set of covering, and any reducible element of the covering 
is a join-reducible element of the fixed point set of covering.
\begin{prop}\label{P:Prop10}
Let $\mathbf{C}$ be a covering of $U$. For any $K\in \mathbf{C}$, \\
$(1)$ $K$ is a join-irreducible element of $\mathcal{F}$ if $K$ is an irreducible element of $\mathbf{C},$ \\
$(2)$ $K$ is a join-reducible element of $\mathcal{F}$ if $K$ is a reducible element of $\mathbf{C}.$
\end{prop}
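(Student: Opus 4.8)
The plan is to argue entirely inside the lattice $\mathcal{F}$. Two standing facts drive everything. First, by Proposition~\ref{P:Prop3}$(6)$ every block satisfies $FL(K)=K$, so $K\in\mathcal{F}$ for each $K\in\mathbf{C}$, and it is meaningful to ask whether $K$ is join-irreducible in $\mathcal{F}$. Second, by Theorem~\ref{T:Lattice2} the join in $\mathcal{F}$ is ordinary union, and every $X\in\mathcal{F}$ obeys $X=FL(X)=\cup\{K'\in\mathbf{C}\mid K'\subseteq X\}$; that is, each element of $\mathcal{F}$ is precisely the union of the blocks of $\mathbf{C}$ that it contains. I would use this decomposition to translate statements about joins in $\mathcal{F}$ into statements about unions of blocks inside $\mathbf{C}$.

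For part $(1)$ I would work directly with an arbitrary join decomposition. Suppose $K$ is irreducible and $K=X\vee Y=X\cup Y$ with $X,Y\in\mathcal{F}$; the goal is to show $K=X$ or $K=Y$. Assume instead $X\neq K\neq Y$. Since $X,Y\subseteq K$, both are then proper, i.e. $X\subsetneq K$ and $Y\subsetneq K$. Expanding $X$ and $Y$ through the decomposition above, every block occurring is contained in $X$ or in $Y$, hence is a proper subset of $K$ and in particular lies in $\mathbf{C}-\{K\}$. Therefore $K=X\cup Y$ exhibits $K$ as a union of members of $\mathbf{C}-\{K\}$, so $K$ is reducible, contradicting irreducibility. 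Hence $K=X$ or $K=Y$, and $K$ is join-irreducible. This step is routine and requires no finiteness assumption.

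For part $(2)$, suppose $K$ is reducible, so $K=\cup\mathcal{D}$ with $\mathcal{D}=\{K'\in\mathbf{C}\mid K'\subsetneq K\}\subseteq\mathcal{F}$; since no single proper block covers $K$, we have $|\mathcal{D}|\geq 2$ and every member of $\mathcal{D}$ differs from $K$. In $\mathcal{F}$ this reads $K=\bigvee_{K'\in\mathcal{D}}K'$, a join of elements each strictly below $K$. I would then finish by contradiction: were $K$ join-irreducible, iterating the defining implication $a=b\vee c\Rightarrow a=b$ or $a=c$ across this finite join would force $K=K'$ for some $K'\in\mathcal{D}$, contradicting $K'\subsetneq K$. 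Hence $K$ is join-reducible.

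The main obstacle is exactly this last step. The tempting direct route---split the blocks comprising $K$ into two groups whose unions are both proper---fails in general, because overlapping proper blocks can already recover $K$ (for instance $K=\{1,2,3\}$ with blocks $\{1,2\},\{1,3\},\{2,3\}$, where every pair covers $K$). The clean way around this is to avoid splitting the blocks and instead invoke the abstract fact that a join-irreducible element cannot equal a finite join of strictly smaller elements. This is where the standard finiteness of the universe $U$ enters, guaranteeing that $\bigvee_{K'\in\mathcal{D}}K'$ is a finite join so that the iteration terminates.
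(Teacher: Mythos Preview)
Your argument is correct. Part~(1) is the same idea as the paper's: both decompose $X,Y\in\mathcal{F}$ into unions of covering blocks and use irreducibility of $K$ to conclude that one of those blocks must be $K$ itself; the paper phrases this directly (some $K_k$ in the decomposition equals $K$, hence $K=X$ or $K=Y$), you phrase it contrapositively, but the content is identical.

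Part~(2) genuinely differs. The paper splits the index set $I$ into two pieces $J',T'$ and writes $K=(\cup_{j\in J'}K_j)\cup(\cup_{t\in T'}K_t)$, but never justifies why $J',T'$ can be chosen with \emph{both} sub-unions strictly below $K$; your $\{1,2\},\{1,3\},\{2,3\}$ example shows that a naive partition of $I$ need not achieve this, so the paper's step requires extra care it does not supply. Your route avoids the difficulty altogether: expressing $K$ as a finite join of strictly smaller lattice elements and iterating the defining implication of join-irreducibility is both shorter and complete. The paper's approach can be repaired (e.g.\ pass to a minimal subfamily of $\{K_i\}$ still covering $K$ and peel off a single block), but yours is the cleaner argument.
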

\begin{proof}
For any $K\in \mathbf{C}$, according to Proposition~\ref{P:Prop3}, $FL(K)=K.$ Hence $K\in \mathcal{F}.$

$(1)$ Suppose there exist $X, Y\in \mathcal{F}$ such that $K=X\cup Y,$ then there exist some elements $K_{i}(i\in I)$ and $K_{j}(j\in J)$ in $\mathbf{C}$ such that $K_{i}\subseteq X$, $K_{j}\subseteq Y$ and $X=\cup\{K_{i}\in \mathbf{C}|K_{i}\subseteq X(i\in I)\}, Y=\cup\{K_{j}\in \mathbf{C}|K_{j}\subseteq X(j\in J)\},$ where $I, J\subseteq \{1,2,\cdots, |\mathbf{C}|\}.$ Thus $K=(\cup_{K_{i}\subseteq X(i\in I)}K_{i})\cup(\cup_{K_{j}\subseteq Y(i\in J)}K_{j}).$ Since $K$ is an irreducible element of $\mathbf{C},$ then there exists $k\in I\cup
J$ such that $K=K_{k}$. Since $K=X\cup Y,$ then $X\subseteq K$ and $Y\subseteq K.$ If $K_{k}\subseteq X,$ then $K=X.$ If $K_{k}\subseteq Y,$ then $K=Y.$ Therefore, $K=X$ or $K=Y.$ So $K$ is a join-irreducible element of $\mathcal{F}.$

$(2)$ Since $K$ is a reducible element of $\mathbf{C}$, then there exist some elements in $\mathbf{C}-\{K\}$ such that $K$ is a union of these elements, i.e., there exist some elements $K_{i}(i\in I)$ such that $K=\cup_{i\in I}K_{i},$ where $I\subseteq  S=\{1,2,\cdots, |\mathbf{C}|\}.$  Since $K_{j(j\in S')}\subseteq \cup_{i\in S'}K_{i}$ for any $S'\subseteq S,$ then $FL(\cup_{i\in S'}K_{i})=\cup\{K_{j(j\in S')}|K_{j}\subseteq \cup_{i\in S'}K_{i}\}=\cup_{i\in S'}K_{i}.$ Therefore, for any $J, T\subseteq I,$ $FL(\cup_{j\in J}K_{j})=\cup_{j\in J}K_{j}$ and $FL(\cup_{t\in T}K_{t})=\cup_{t\in T}K_{t}.$  Hence $\cup_{j\in J}K_{j}\in \mathcal{F}, \cup_{t\in T}K_{t}\in \mathcal{F}.$ Thus there exist $J', T'\subseteq I$ such that $K=\cup_{i\in I}K_{i}=(\cup_{j\in J'}K_{j})\cup(\cup_{t\in T'}K_{t}).$ Therefore, $K$ is a join-reducible element of $\mathcal{F}$.
\end{proof}

The following theorem shows that the fixed point set of covering induced by any covering is a complete lattice.
\begin{theo}
Let $\mathbf{C}$ be a covering of $U$. 
$\mathcal{F}$ is a complete lattice.
\end{theo}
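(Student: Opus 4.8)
The goal is to show that $\mathcal{F}$ is a complete lattice, meaning that for any subcollection $\mathcal{S}\subseteq \mathcal{F}$ both the join $\bigvee\mathcal{S}$ and the meet $\bigwedge\mathcal{S}$ exist in $\mathcal{F}$. Since $\mathcal{F}$ is already known to be a (bounded) lattice by Theorem~\ref{T:Lattice2}, the plan is to exhibit explicit formulas for these arbitrary joins and meets and verify that the resulting sets lie in $\mathcal{F}$, that is, that they are fixed points of $FL$.

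For the join, I would conjecture $\bigvee \mathcal{S} = \cup \mathcal{S}$, mirroring the binary case where $X\vee Y = X\cup Y$. The plan is to prove $\cup\mathcal{S}\in\mathcal{F}$ by the same contradiction argument used in Theorem~\ref{T:Lattice2}: if $FL(\cup\mathcal{S})\neq \cup\mathcal{S}$, then since $FL(\cup\mathcal{S})\subseteq \cup\mathcal{S}$ always holds by Proposition~\ref{P:Prop3}$(3)$, there is some $x\in\cup\mathcal{S}$ with $x\notin FL(\cup\mathcal{S})$, meaning every $K\in\mathbf{C}$ containing $x$ satisfies $K\nsubseteq \cup\mathcal{S}$. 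But $x\in X$ for some $X\in\mathcal{S}$, and $X=FL(X)$ forces a $K\in\mathbf{C}$ with $x\in K\subseteq X\subseteq\cup\mathcal{S}$, a contradiction. Once $\cup\mathcal{S}\in\mathcal{F}$, it is clearly an upper bound of $\mathcal{S}$, and any upper bound $Y\in\mathcal{F}$ satisfies $X\subseteq Y$ for all $X\in\mathcal{S}$, hence $\cup\mathcal{S}\subseteq Y$; so $\cup\mathcal{S}$ is the least upper bound.

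For the meet, the binary formula $X\wedge Y = FL(X\cap Y)$ suggests taking $\bigwedge\mathcal{S} = FL(\cap\mathcal{S})$. The membership $FL(\cap\mathcal{S})\in\mathcal{F}$ is immediate from the idempotence $FL(FL(\cap\mathcal{S}))=FL(\cap\mathcal{S})$ in Proposition~\ref{P:Prop3}$(4)$. Then I would check it is the greatest lower bound: since $FL(\cap\mathcal{S})\subseteq\cap\mathcal{S}\subseteq X$ for each $X\in\mathcal{S}$, it is a lower bound; and for any lower bound $Z\in\mathcal{F}$ we have $Z\subseteq X$ for all $X$, so $Z\subseteq\cap\mathcal{S}$, and applying $FL$ with monotonicity (Proposition~\ref{P:Prop3}$(5)$) together with $Z=FL(Z)$ gives $Z=FL(Z)\subseteq FL(\cap\mathcal{S})$. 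Thus $FL(\cap\mathcal{S})$ is the greatest lower bound.

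The argument is essentially routine, being a straightforward generalization of Theorem~\ref{T:Lattice2} from two elements to arbitrary families, so there is no single hard obstacle. The one point demanding slight care is the meet: one must resist the temptation to use $\cap\mathcal{S}$ itself, since the intersection of fixed points of $FL$ need not be a fixed point, which is exactly why the binary meet was defined as $FL(X\cap Y)$ rather than $X\cap Y$. Handling the empty subcollection $\mathcal{S}=\emptyset$ (whose meet should be $U$ and join $\emptyset$) and confirming the formulas degrade correctly at the bounds noted in the Remark is the only other detail worth a sentence.
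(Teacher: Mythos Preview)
Your proposal is correct and follows essentially the same approach as the paper: the paper also takes $\bigvee\mathcal{S}=\cup\mathcal{S}$ and $\bigwedge\mathcal{S}=FL(\cap\mathcal{S})$, proves $\cup\mathcal{S}\in\mathcal{F}$ by the same contradiction argument, and obtains $FL(\cap\mathcal{S})\in\mathcal{F}$ from idempotence of $FL$. Your version is in fact slightly more complete, since you explicitly verify that these candidates are the least upper bound and greatest lower bound and you address the $\mathcal{S}=\emptyset$ case, whereas the paper simply asserts the identifications $\vee\mathcal{S}=\cup\mathcal{S}$ and $\wedge\mathcal{S}=FL(\cap\mathcal{S})$ without checking the universal properties.
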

\begin{proof}
For any $\mathcal{S}\subseteq \mathcal{F},$ we need to prove that $\wedge \mathcal{S}\in \mathcal{F}$ and $\vee \mathcal{S}\in \mathcal{F}.$ In other words, we need to prove $\cup \mathcal{S}\in \mathcal{F}$ and $FL(\cap \mathcal{S})\in \mathcal{F}.$ Since $\mathcal{S}\subseteq \mathcal{F},$ then $\cap \mathcal{S}\subseteq U.$ According to Proposition~\ref{P:Prop3}, $FL(FL(\cap \mathcal{S}))=FL(\cap\mathcal{S}),$ i.e., $FL(\cap \mathcal{S})\in \mathcal{F}.$ If $\cup \mathcal{S}\notin \mathcal{F},$ then there exists $y\in \cup \mathcal{S}$ such that $K\nsubseteq \cup \mathcal{S}$ for any $K\in \mathbf{C}$ and $y\in K$.  Hence there exists $X\in \mathcal{S}$ such that $y\in X$ and $K\nsubseteq X$ for any $K\in \mathbf{C}$ and $y\in K$. So $y\notin FL(X),$ i.e., $X\notin \mathcal{F},$ which is  contradictory with $X\in \mathcal{F}.$
Therefore, $\cup\mathcal{S}\in \mathcal{F}.$
\end{proof}
The following example shows that the fixed point set of covering induced by any covering is not always a distributive lattice.
\begin{exa}
Let $U=\{1,2,3,4\}$ and $\mathbf{C}=\{\{1,2\},\{2, 3\},\{1,3,4\}\}.$
Then $\mathcal{F}=\{\emptyset, \{1,$\\$ 2\}, \{2, 3\}, \{1, 3, 4\}, \{1, 2, 3\}, U\}.$  $\{1, 2, 3\}\wedge (\{1, 3, 4\}\vee\{1, 2\})=\{1, 2, 3\},$ but $(\{1, 2,$\\$ 3\}\wedge \{1, 3, 4\})\vee(\{1, 2, 3\}\wedge\{1, 2\})=\{1, 2\}.$ Hence $\mathcal{F}$ is not a distributive lattice.
\end{exa}

In the following, we study under what condition the fixed point set of covering becomes a distributive lattice.
\begin{theo}\label{T:Distributive2}
Let $\mathbf{C}$ be a covering of $U$.  If $\mathbf{C}$ is unary, then
 $\mathcal{F}$ is a distributive lattice.
\end{theo}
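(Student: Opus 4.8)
The plan is to prove that when $\mathbf{C}$ is unary, the lattice $\mathcal{F}$ satisfies one of the two (equivalent) distributive laws. Since the join in $\mathcal{F}$ is ordinary union and the meet is $X \wedge Y = FL(X \cap Y)$, the obstacle is that the meet is \emph{not} simply intersection, so I cannot invoke the distributivity of the power-set lattice directly as was done in Theorem~\ref{T:Distributive1}. Instead I would verify $X \wedge (Y \vee Z) = (X \wedge Y) \vee (X \wedge Z)$ for arbitrary $X, Y, Z \in \mathcal{F}$, which unwinds to the set-theoretic claim
\begin{center}
$FL(X \cap (Y \cup Z)) = FL(X \cap Y) \cup FL(X \cap Z).$
\end{center}
The inclusion $\supseteq$ is automatic by the monotonicity of $FL$ (Proposition~\ref{P:Prop3}(5)), since $X \cap Y$ and $X \cap Z$ are both contained in $X \cap (Y \cup Z)$; so the whole difficulty lies in proving the reverse inclusion $\subseteq$.

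To establish $\subseteq$, I would take an arbitrary $u \in FL(X \cap (Y \cup Z))$ and produce a witness block. By Definition~\ref{D:DefnCoveringApproximation} there is some $K \in \mathbf{C}$ with $u \in K \subseteq X \cap (Y \cup Z)$. Here I would bring in the hypothesis that $\mathbf{C}$ is unary: by the definition of unary (Definition~3) the minimal description $Md(u)$ is a singleton, so there is a \emph{unique} minimal block $K_0$ containing $u$, and any block containing $u$ contains $K_0$; in particular $K_0 \subseteq K \subseteq X \cap (Y\cup Z)$ and $K_0 = N(u)$ is the neighborhood of $u$. The key point is that this single block $K_0$ lies entirely inside $X$, and since $K_0 \subseteq Y \cup Z$, I want to argue that $K_0$ actually fits inside $X \cap Y$ or inside $X \cap Z$. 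This is where Theorem~\ref{T:Theo1} becomes the essential tool: unary-ness is equivalent to the intersection of any two blocks being a union of blocks, which gives the combinatorial rigidity needed to push the irreducible/minimal block $K_0$ wholly into one of the two pieces rather than having it split across $Y$ and $Z$.

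The main obstacle I anticipate is exactly this last step: a priori $K_0 \subseteq Y \cup Z$ does not force $K_0 \subseteq Y$ or $K_0 \subseteq Z$, because $K_0$ could straddle $Y$ and $Z$. I expect to resolve it by exploiting that $Y, Z \in \mathcal{F}$ are themselves unions of blocks of $\mathbf{C}$ (being fixed points of $FL$), so $K_0 \subseteq Y \cup Z$ means $K_0$ is covered by the blocks comprising $Y$ and $Z$; using the uniqueness of the minimal description of $u$ together with Theorem~\ref{T:Theo1}, I would show that the minimal block $K_0 = N(u)$ through $u$ must already sit inside the blocks forming $Y$ or those forming $Z$, hence $u \in FL(X \cap Y)$ or $u \in FL(X \cap Z)$. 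Once each such $u$ is placed into one of the two right-hand terms, the reverse inclusion follows and the distributive identity is proved. I would close by noting that establishing one distributive law suffices, by the definition of a distributive lattice, so $\mathcal{F}$ is distributive.
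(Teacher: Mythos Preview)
Your proposal is correct, and the element-chase you outline is actually cleaner than the paper's argument. The paper expands $X,Y,Z$ as unions of blocks, invokes Theorem~\ref{T:Theo1} to rewrite each pairwise intersection $K_i\cap K_j$ as $\bigcup_{x\in K_i\cap K_j}K_x$ (where $Md(x)=\{K_x\}$), and then checks by direct computation that the two sides of the distributive law coincide as the same union of $K_x$'s. Your route instead reduces to the single inclusion $FL(X\cap(Y\cup Z))\subseteq FL(X\cap Y)\cup FL(X\cap Z)$ and handles it pointwise via the minimal block $K_0$ through $u$.

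One remark: you overstate the role of Theorem~\ref{T:Theo1}. In your argument it is not needed at all. Once you have $K_0\subseteq X\cap(Y\cup Z)$, the point $u$ itself lies in $Y$ or in $Z$; say $u\in Y$. Since $Y\in\mathcal{F}$ means $Y=FL(Y)$, there is some $K'\in\mathbf{C}$ with $u\in K'\subseteq Y$, and because $\mathbf{C}$ is unary the unique minimal block $K_0$ satisfies $K_0\subseteq K'\subseteq Y$. Hence $K_0\subseteq X\cap Y$ and $u\in FL(X\cap Y)$. So the ``obstacle'' you anticipated dissolves immediately from unary-ness plus $Y,Z\in\mathcal{F}$, without appealing to the block-intersection characterization in Theorem~\ref{T:Theo1}. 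The paper's version does use Theorem~\ref{T:Theo1} because it works globally with intersections of arbitrary blocks rather than localizing at a single point; your pointwise approach sidesteps that machinery entirely.
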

\begin{proof}
For any $X, Y, Z\in \mathcal{F}$, there exist some elements $K_{i}(i\in I), K_{j}(j\in J)$ and $ K_{t}(t\in T)$ in $\mathbf{C}$ such that $K_{i}\subseteq X, K_{j}\subseteq Y, K_{t}\subseteq Z$ and $X=\cup\{K_{i}\in \mathbf{C}|K_{i}\subseteq X(i\in I)\}, Y=\cup\{K_{j}\in \mathbf{C}|K_{j}\subseteq Y(j\in J)\}, Z=\cup\{K_{t}\in \mathbf{C}|K_{t}\subseteq X(t\in T)\},$ where $I,$ $J,$ $T\subseteq \{1,2,\cdots, |\mathbf{C}|\}.$
$X\wedge (Y\vee Z)=FL(X\cap(Y\cup Z))=FL((X\cap Y)\cup(X\cap Z))=FL(((\cup_{K_{i}\subseteq X(i\in I)}K_{i})\cap(\cup_{K_{j}\subseteq Y(j\in J)}K_{j}))\cup((\cup_{K_{i}\subseteq X(i\in I)}K_{i})\cap(\cup_{K_{t}\subseteq Z(t\in T)}K_{t})))=FL((\cup_{ K_{i}\subseteq X(i\in I)\atop K_{j}\subseteq Y(j\in J)}(K_{i}\cap K_{j}))\cup(\cup_{ K_{i}\subseteq X(i\in I)\atop K_{t}\subseteq Z(t\in T)}(K_{i}\cap K_{t}))).$ $(X\wedge Y)\vee (X\wedge Z)=FL(X\cap Y)\cup FL(X\cap Z)=FL(\cup_{K_{i}\subseteq X(i\in I)\atop K_{j}\subseteq Y(j\in J)}(K_{i}\cap K_{j}))\cup FL(\cup_{ K_{i}\subseteq X(i\in I)\atop  K_{t}\subseteq Z(t\in T)}(K_{i}\cap K_{t})).$ Since $\mathbf{C}$ is unary, then $|Md(x)|=1$ for any $x\in U.$ Let $Md(x)=\{K_{x}\}$ for any $x\in U.$ According to Theorem~\ref{T:Theo1}, $K_{i}\cap K_{j}$ is a union of finite elements in $\mathbf{C}$. Hence, $K_{i}\cap K_{j}=\cup_{x\in K_{i}\cap K_{j}}K_{x}$. Therefore, $X\wedge (Y\vee Z)=FL(X\cap(Y\cup Z))={\cup_   {x\in (\cup_{ K_{i}\subseteq X(i\in I)\atop K_{j}\subseteq Y(j\in J)}(K_{i}\cap K_{j}))\cup(\cup_{ K_{i}\subseteq X(i\in I)\atop  K_{t}\subseteq Z(t\in T)}(K_{i}\cap K_{t}))}}K_{x}=(\cup_{y\in (\cup_{ K_{i}\subseteq X(i\in I)\atop K_{j}\subseteq Y(j\in J)}(K_{i}\cap K_{j}))}K_{y})\cup(\cup_{z\in (\cup_{K_{i}\subseteq X(i\in I)\atop  K_{t}\subseteq Z(t\in T)}(K_{i}\cap K_{t}))} K_{z})=(X\wedge Y)\vee (X\wedge Z).$ Hence $\mathcal{F}$ is a distributive lattice.
\end{proof}

The following proposition shows that an intersection of any two elements of the fixed point set of covering induced by a unary covering is closed.
\begin{prop}\label{P:Prop11}
Let $\mathbf{C}$ be a covering of $U$. If $\mathbf{C}$ is unary, then $X\cap Y\in \mathcal{F}$ for any $X, Y\in \mathcal{F}.$
\end{prop}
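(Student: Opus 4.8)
The plan is to prove the two inclusions $FL(X\cap Y)\subseteq X\cap Y$ and $X\cap Y\subseteq FL(X\cap Y)$ separately, since by Definition~\ref{D:DefnFixedPointOfCovering} the assertion $X\cap Y\in\mathcal{F}$ is precisely $FL(X\cap Y)=X\cap Y$. The first inclusion is immediate: it is nothing but Proposition~\ref{P:Prop3}$(3)$ applied to the set $X\cap Y$, and it holds for \emph{any} covering, unary or not. So all the real work lies in the reverse inclusion, and this is where the hypothesis that $\mathbf{C}$ is unary must enter.

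For the reverse inclusion I would argue pointwise. Fix $x\in X\cap Y$. Because $X\in\mathcal{F}$, we have $X=FL(X)=\cup\{K\in\mathbf{C}\mid K\subseteq X\}$, so there is some $K_{1}\in\mathbf{C}$ with $x\in K_{1}\subseteq X$; symmetrically, from $Y\in\mathcal{F}$ there is some $K_{2}\in\mathbf{C}$ with $x\in K_{2}\subseteq Y$. Then $x\in K_{1}\cap K_{2}\subseteq X\cap Y$. The key step is now to produce a \emph{single} element of $\mathbf{C}$ that contains $x$ and lies inside $X\cap Y$: since $\mathbf{C}$ is unary, Theorem~\ref{T:Theo1} guarantees that $K_{1}\cap K_{2}$ is a union of finitely many elements of $\mathbf{C}$, so at least one of them, say $K$, satisfies $x\in K\subseteq K_{1}\cap K_{2}\subseteq X\cap Y$. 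Hence $x\in FL(X\cap Y)$. As $x$ was arbitrary, $X\cap Y\subseteq FL(X\cap Y)$, and combining this with the first inclusion gives $FL(X\cap Y)=X\cap Y$, i.e., $X\cap Y\in\mathcal{F}$.

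I expect the only genuine obstacle to be this middle step, namely passing from the two covering elements $K_{1},K_{2}$ to a single covering element containing $x$ inside their intersection. This is exactly the content of the unary hypothesis as reformulated by Theorem~\ref{T:Theo1}, and it is also where the example following Theorem~\ref{T:Lattice2} (a non-unary covering whose $\mathcal{F}$ fails even to be distributive) shows that the hypothesis cannot be dropped: for a general covering, $K_{1}\cap K_{2}$ need not be a union of elements of $\mathbf{C}$, so there may be no member of $\mathbf{C}$ witnessing $x\in FL(X\cap Y)$. Everything else is routine set manipulation, so no lemmas beyond Proposition~\ref{P:Prop3} and Theorem~\ref{T:Theo1} should be required.
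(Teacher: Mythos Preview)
Your proof is correct. The overall strategy---showing $X\cap Y\subseteq FL(X\cap Y)$ pointwise by exhibiting, for each $x\in X\cap Y$, a covering element $K\in\mathbf{C}$ with $x\in K\subseteq X\cap Y$---is exactly the paper's strategy as well. The only difference is in how that element $K$ is produced. The paper uses the definition of ``unary'' directly: since $|Md(y)|=1$, the unique $K_y\in Md(y)$ is contained in every member of $\mathbf{C}$ through $y$, and in particular in the witnesses for $y\in FL(X)$ and $y\in FL(Y)$, so $K_y\subseteq X\cap Y$ in one step. You instead take two arbitrary witnesses $K_1\subseteq X$ and $K_2\subseteq Y$ and invoke Theorem~\ref{T:Theo1} to break $K_1\cap K_2$ into covering elements. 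Both routes are valid; the paper's is marginally shorter because it avoids the detour through Theorem~\ref{T:Theo1}, while yours makes more explicit why the unary hypothesis is exactly what is needed at the intersection step.
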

\begin{proof}
For any $y\in X\cap Y$, $y\in X $ and $y\in Y.$ Since $\mathbf{C}$ is unary, then $|Md(x)|=1$ for any $x\in U.$ Let $Md(x)=\{K_{x}\}$ for any $x\in U.$ Since $X, Y\in \mathcal{F},$ then $K_{y}\subseteq X$ and $K_{y}\subseteq Y,$ i.e., $K_{y}\subseteq X\cap Y.$ Therefore, $FL(X\cap Y)=\cup\{K\in \mathbf{C}|K\subseteq X\cap Y\}=\cup\{K_{y}|y\in X\cap Y\}=X\cap Y.$  Hence $X\cap Y\in \mathcal{F}.$
\end{proof}

The fixed point set of covering induced by a unary covering is both a pseudocomplemented lattice and a dual pseudocomplemented lattice.
That is to say any element of the fixed point set of covering has a pseudocomplement and a dual pseudocomplement.
For any element $X$, its pseudocomplement is the lower approximation of its complement. Its dual pseudocomplement is the union of
all the join-irreducible elements that contain any element in the complement of $X$.

\begin{theo}\label{T:Stone1}
Let $\mathbf{C}$ be a covering of $U$. If $\mathbf{C}$ is a unary, then:\\
$(1)$ 
$\mathcal{F}$ is a pseudocomplemented lattice, and $X^{\ast}=FL(X^{c})$ for any $X\in \mathcal{F};$\\
$(2)$ 
$\mathcal{F}$ is a dual pseudocomplemented lattice, and $X^{+}=\cup_{x\in X^{c}(x\in K\in \mathcal{J}(\mathcal{F}))}K$ for any $X\in \mathcal{F}.$
\end{theo}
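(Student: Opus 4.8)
The plan is to verify directly that the two proposed operations satisfy the defining conditions of Definitions~\ref{D:Stone} and~\ref{D:DualStone}, exploiting that under the unary hypothesis $\mathcal{F}$ is closed under intersection (Proposition~\ref{P:Prop11}) and that every minimal description is a singleton. For part $(1)$ I would first note $FL(X^{c})\in\mathcal{F}$ by idempotency of $FL$ (Proposition~\ref{P:Prop3}(4)), so $X^{\ast}$ is a genuine element of the lattice. Since $FL(X^{c})\subseteq X^{c}$ (Proposition~\ref{P:Prop3}(3)), we get $X\cap FL(X^{c})=\emptyset$, whence $X\wedge FL(X^{c})=FL(X\cap FL(X^{c}))=FL(\emptyset)=\emptyset$. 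For minimality, take $Y\in\mathcal{F}$ with $X\wedge Y=FL(X\cap Y)=\emptyset$; because $\mathbf{C}$ is unary, Proposition~\ref{P:Prop11} gives $X\cap Y\in\mathcal{F}$, so $X\cap Y=FL(X\cap Y)=\emptyset$, i.e. $Y\subseteq X^{c}$. Monotonicity of $FL$ (Proposition~\ref{P:Prop3}(5)) then yields $Y=FL(Y)\subseteq FL(X^{c})$, so $Y\leq X^{\ast}$, which is exactly the pseudocomplement condition.

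For part $(2)$ the preliminary work is structural. Writing $Md(x)=\{K_{x}\}$ (legitimate since $\mathbf{C}$ is unary), I would record three facts about a unary covering: (i) $N(x)=K_{x}\in\mathbf{C}$; (ii) $K_{x}$ is irreducible in $\mathbf{C}$ — otherwise $K_{x}=\cup_{i}B_{i}$ with $B_{i}\neq K_{x}$ would put $x$ in some $B_{i}\subsetneq K_{x}$, violating minimality of $K_{x}$; and (iii) every $X\in\mathcal{F}$ satisfies $x\in X\Rightarrow N(x)\subseteq X$, because $x$ lies in some block $B\subseteq X$ and $N(x)=K_{x}\subseteq B$. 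Combining (ii)--(iii) with Proposition~\ref{P:Prop10} and the fact that every element of $\mathcal{F}$ is a union of blocks, one shows $\mathcal{J}(\mathcal{F})$ is precisely the set of irreducible blocks, and each such block equals $N(y_{0})$ for some $y_{0}$ it contains.

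With this in hand I would verify the dual pseudocomplement conditions for $X^{+}=\cup\{K\in\mathcal{J}(\mathcal{F})\mid K\cap X^{c}\neq\emptyset\}$. First, $X^{+}\in\mathcal{F}$ since it is a union of elements of $\mathcal{F}$. Next, $X\cup X^{+}=U$: for $x\in X^{c}$ the block $N(x)=K_{x}$ is irreducible by (ii), hence join-irreducible by Proposition~\ref{P:Prop10}(1), and it contains $x$ and meets $X^{c}$, so $x\in K_{x}\subseteq X^{+}$, giving $X^{c}\subseteq X^{+}$. Finally, for minimality take $Y\in\mathcal{F}$ with $X\cup Y=U$, equivalently $X^{c}\subseteq Y$; given $K\in\mathcal{J}(\mathcal{F})$ with $p\in K\cap X^{c}$, write $K=N(y_{0})$ with $y_{0}\in K$. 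If $y_{0}\in X$, fact (iii) forces $K=N(y_{0})\subseteq X$, contradicting $p\in X^{c}$; hence $y_{0}\in X^{c}\subseteq Y$, and fact (iii) applied to $Y$ gives $K=N(y_{0})\subseteq Y$. Thus every block in the union defining $X^{+}$ lies in $Y$, so $X^{+}\subseteq Y$, establishing $X^{+}$ as the dual pseudocomplement.

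The main obstacle is the structural analysis of part $(2)$: correctly identifying $\mathcal{J}(\mathcal{F})$ with the irreducible blocks and, crucially, showing each join-irreducible block is the neighborhood $N(y_{0})$ of one of its own points. This is what makes the minimality argument close, since it lets the unary fixed-point property (iii) propagate the containment $K\subseteq Y$ from the single witness $y_{0}\in X^{c}$; it is also the only place where the full force of the unary hypothesis (through Proposition~\ref{P:Prop11} and the singleton minimal descriptions) is genuinely needed.
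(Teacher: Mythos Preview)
Your argument for part $(1)$ is essentially the paper's: idempotency gives $FL(X^{c})\in\mathcal{F}$, contraction gives $X\wedge FL(X^{c})=\emptyset$, and Proposition~\ref{P:Prop11} plus monotonicity handle the minimality clause.

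For part $(2)$ your route is genuinely different from the paper's, and cleaner. The paper never identifies $\mathcal{J}(\mathcal{F})$ explicitly; instead it splits into the cases $X^{+}=X^{c}$ versus $X^{c}\subsetneq X^{+}$, and in the second case argues by contradiction: assuming some $y\in X^{+}\setminus Y$, it locates a witness $z\in X^{c}$ lying with $y$ in a common join-irreducible $K$, observes $z\in Y$ but $K\not\subseteq Y$, and derives $z\notin FL(Y)$, contradicting $Y\in\mathcal{F}$. You instead do the structural work up front---showing that under the unary hypothesis every nonempty $K\in\mathcal{J}(\mathcal{F})$ is exactly $N(y_{0})=K_{y_{0}}$ for some $y_{0}\in K$---and then the minimality step becomes a direct two-line argument with no case split. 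What your approach buys is transparency: the reason $X^{+}\subseteq Y$ is simply that each join-irreducible contributor $K$ is anchored at a single point $y_{0}$, and that point is forced into $X^{c}\subseteq Y$, dragging $K$ along via fact~(iii). The paper's approach avoids the preliminary identification of $\mathcal{J}(\mathcal{F})$ but pays for it with a more opaque contradiction argument. One small caveat on your side: the claim that every join-irreducible of $\mathcal{F}$ is a single block $K_{y_{0}}$ uses finiteness (to pass from ``$K=\bigcup_{x\in K}K_{x}$'' to ``$K=K_{y_{0}}$'' via binary join-irreducibility); this is implicit throughout the paper, but worth stating.
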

\begin{proof}
$(1)$ According to Proposition~\ref{P:Prop3}, $FL(FL(X^{c}))=FL(X^{c})$,
then $FL(X^{c})\in \mathcal{F}.$ According to Proposition~\ref{P:Prop3}, $FL(X^{c})\subseteq X^{c}$. Hence $X\cap FL(X^{c})=\emptyset.$
Therefore, $FL(X\cap FL(X^{c}))=\emptyset.$
In the following, we need to prove $Y\subseteq FL(X^{c})$ if $FL(X\cap Y)=\emptyset$ for any $Y\in \mathcal{F}.$ According to Proposition~\ref{P:Prop11}, $X\cap Y\in \mathcal{F}$ for any $X, Y\in \mathcal{F}.$ Hence $FL(X\cap Y)=X\cap Y.$ If $FL(X\cap Y)=\emptyset$, then $X\cap Y=\emptyset.$
 Therefore, for any $Y\in \mathcal{F},$ if $FL(X\cap Y)=\emptyset,$ then $X\cap Y=\emptyset.$ Since $X\cap Y=\emptyset$, then $Y\subseteq X^{c}.$ According to Proposition~\ref{P:Prop3}, $FL(Y)\subseteq FL(X^{c}).$ Since $Y\in \mathcal{F},$ then $Y=FL(Y)\subseteq FL(X^{c}).$ Hence $X^{\ast}=FL(X^{c})$ for any $X\in \mathcal{F},$ i.e.,
$\mathcal{F}$ is a pseudocomplemented lattice.

$(2)$ 
For any $X\in \mathcal{F},$ $FL(\cup_{x\in X^{c}(x\in K\in \mathcal{J(\mathcal{F})})}K)=\cup_{x\in X^{c}(x\in K\in \mathcal{J(\mathcal{F})})}K.$ Hence $\cup_{x\in
X^{c}(x\in K\in \mathcal{J(\mathcal{F})})}K\in \mathcal{F}$ for any $X\in \mathcal{F}.$

It is straightforward that $X\cup(\cup_{x\in X^{c}(x\in K\in \mathcal{J(\mathcal{F})})}K)=U.$

 In the following, we need to prove that for any $Y\in \mathcal{F},$ if $X\cup Y=U,$ then $\cup_{x\in X^{c}(x\in K\in \mathcal{J(\mathcal{F})})}K\subseteq Y.$
 The following two cases are used to prove it. Case 1: If $\cup_{x\in X^{c}(x\in K\in \mathcal{J(\mathcal{F})})}K=X^{c},$ then $\cup_{x\in X^{c}(x\in K\in \mathcal{J(\mathcal{F})})}K\subseteq Y.$
 Case 2: If $X^{c}\subset\cup_{x\in X^{c}(x\in K\in \mathcal{J(\mathcal{F})})}K,$ then $X^{c}\subset Y.$ 
 If $X^{c}=Y,$ then $FL(X^{c})=FL(Y)=Y=X^{c}.$
 Since $FL(X^{c})=\cup\{K\in \mathbf{C}| K\subseteq X^{c}\}=\cup\{K_{x}\in Md(x)|x\in X^{c}\}
 =\cup\{K\in\mathcal{J}(\mathbf{C})|x\in X^{c}\wedge x\in K\}=\cup_{x\in X^{c}(x\in K\in \mathcal{J(\mathcal{F})})}K,$  $\cup_{x\in X^{c}(x\in K\in \mathcal{J(\mathcal{F})})}K=X^{c},$
 which is contradictory with $X^{c}\subset\cup_{x\in X^{c}(x\in K\in \mathcal{J(\mathcal{F})})}K.$
 Suppose $Y\subset\cup_{x\in X^{c}(x\in K\in \mathcal{J(\mathcal{F})})}K,$
 then there exists $y\in\cup_{x\in X^{c}(x\in K\in \mathcal{J(\mathcal{F})})}K$ such that $y\notin Y.$
 So $y\notin X^{c},$ which implies there exists $z\in X^{c}$ such that
 $y\in K$ for any $K\in \mathcal{J(\mathcal{F})}$ and $z\in K.$
 Since $X^{c}\subset Y,$ $z\in Y.$ So $K\nsubseteq Y$ for any $K\in \mathcal{J(\mathcal{F})}$ and $z\in K,$ i.e., $z\notin FL(Y).$ In other words, $FL(Y)\neq Y,$
 which is contradictory with $Y\in \mathcal{F}.$ Hence $\cup_{x\in X^{c}(x\in K\in \mathcal{J(\mathcal{F})})}K\subseteq Y.$
 Consequently, $X^{+}=\cup_{x\in X^{c}(x\in K\in \mathcal{J(\mathcal{F})})}K$ for any $X\in \mathcal{F}.$ Therefore,
$\mathcal{F}$ is a dual pseudocomplemented lattice.
\end{proof}

According to Theorem~\ref{T:Stone1}, the fixed point set of covering induced by a unary covering is both a pseudocomplemented lattice and a dual pseudocomplemented lattice. Moreover, according to Definition~\ref{D:Palgebra}, it is a double $p-$algebra.
\begin{remark}
Generally, the fixed point set of covering induced by any unary covering is neither a Stone algebra nor a dual Stone algebra. 
\end{remark}
\begin{exa}\label{E:Example3}
Let $U=\{1,2,3,4\}$ and $\mathbf{C}=\{\{3\},\{1\},\{1,3,4\},\{2,3\}\}.$
Then $\mathcal{F}=\{\emptyset,\{1\}, \{3\},\{1,3\},\{2,3\},\{1, 2, 3\},\{1,3,4\}, U\}.$ Let $X=\{3\}$ and $Y=\{2, 3\}.$ $X^{\ast}=FL(X^{c})=\{1\}, X^{\ast\ast}=FL((X^{\ast})^{c})=\{2,3\},$ i.e.,
$X^{\ast}\cup X^{\ast\ast}\neq U.$ Therefore,
$\mathcal{F}$ is not a Stone algebra. Similarly,
$Y^{+}=\cup_{x\in Y^{c}(x\in K\in \mathcal{J(\mathcal{F})})}K=\{1,3,4\}, Y^{++}=\cup_{x\in (Y^{+})^{c}(x\in K\in \mathcal{J(\mathcal{F})})}K=\{2,3\},$
i.e., $Y^{+}\cap Y^{++}\neq \emptyset.$ Therefore,
$\mathcal{F}$ is not a dual Stone algebra.
\end{exa}

According to Example~\ref{E:Example3}, the fixed point set of covering induced by any unary covering is not always a double Stone algebra. In the following, we study under what conditions that the fixed point set of covering induced by a covering is a boolean lattice and a double Stone algebra, respectively.
\begin{theo}\label{T:Boolean2}
Let $\mathbf{C}$ be a covering of $U.$ If $reduct(\mathbf{C})$ is a partition of $U,$ then 
$\mathcal{F}$ is a boolean lattice.
\end{theo}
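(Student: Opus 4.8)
The plan is to use the characterisation of a boolean lattice as a complemented distributive lattice, so I would establish distributivity and complementation separately. First I would pass from $\mathbf{C}$ to its reduction: by the corollary stating $\mathcal{F}_{\mathbf{C}}=\mathcal{F}_{reduct(\mathbf{C})}$, it suffices to prove the claim for the covering $reduct(\mathbf{C})$, which by hypothesis is a partition of $U$; write $\mathbf{D}=reduct(\mathbf{C})$. Since every $x\in U$ lies in exactly one block of the partition $\mathbf{D}$, we have $|Md(x)_{\mathbf{D}}|=1$, so $\mathbf{D}$ is unary, and Theorem~\ref{T:Distributive2} applied to $\mathbf{D}$ immediately gives that $\mathcal{F}$ is a distributive lattice. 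Boundedness, with least element $\emptyset$ and greatest element $U$, is already recorded in the remark following Theorem~\ref{T:Lattice2}.

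The main work is to prove that $\mathcal{F}$ is complemented, and I would show that for every $X\in\mathcal{F}$ the set-theoretic complement $X^{c}$ is its lattice complement. The key point is that, under a partition, membership in $\mathcal{F}$ coincides with being a union of blocks: for $X\in\mathcal{F}$ we have $X=FL(X)=\cup\{P\in\mathbf{D}|P\subseteq X\}$, so $X$ is a union of blocks of $\mathbf{D}$. I would then argue $X^{c}\in\mathcal{F}$, i.e.\ $FL(X^{c})=X^{c}$. As $FL(X^{c})\subseteq X^{c}$ always holds by Proposition~\ref{P:Prop3}, only $X^{c}\subseteq FL(X^{c})$ needs proof. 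Given $y\in X^{c}$, let $P$ be the unique block of $\mathbf{D}$ containing $y$; if $P$ met $X$, then some $w\in P\cap X$, and since $X$ is a union of blocks $w$ lies in some block contained in $X$, which by disjointness of the partition must be $P$, forcing $P\subseteq X$ and hence $y\in X$, contradicting $y\in X^{c}$. Thus $P\subseteq X^{c}$ and $y\in FL(X^{c})$, so $X^{c}\in\mathcal{F}$. Finally $X\vee X^{c}=X\cup X^{c}=U$ and $X\wedge X^{c}=FL(X\cap X^{c})=FL(\emptyset)=\emptyset$, so $X^{c}$ is a complement of $X$ and $\mathcal{F}$ is complemented.

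The hard part is this complementation step, and specifically the use of the partition hypothesis to guarantee that a block meeting $X$ is entirely contained in $X$; this disjointness of blocks is exactly what fails for a general unary covering, as Example~\ref{E:Example3} illustrates, which is why the weaker hypothesis of mere unarity only yielded a double $p$-algebra. The argument above parallels the proof of Theorem~\ref{T:Boolean1} for $\mathcal{P}$, the difference being that here one works with the first-type approximation $FL$ and unions of irreducible blocks rather than with neighbourhoods directly. Once distributivity and complementation are both in hand, the definition of a boolean lattice yields the conclusion directly.
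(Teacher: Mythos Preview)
Your proposal is correct and follows essentially the same route as the paper: establish unarity to invoke Theorem~\ref{T:Distributive2} for distributivity, note boundedness, and then prove complementation by showing $X^{c}\in\mathcal{F}$ via the partition property. The only cosmetic difference is that you first pass to $\mathbf{D}=reduct(\mathbf{C})$ using the corollary $\mathcal{F}_{\mathbf{C}}=\mathcal{F}_{reduct(\mathbf{C})}$ and work with the partition directly, whereas the paper stays with $\mathbf{C}$, argues that $\mathbf{C}$ itself is unary (since $Md(x)\subseteq reduct(\mathbf{C})$), and uses minimal descriptions $K_{x}$ in the complementation step; the underlying contradiction (a block meeting both $X$ and $X^{c}$ forces $X\notin\mathcal{F}$) is identical.
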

\begin{proof}
First, we prove that if $reduct(\mathbf{C})$ is a partition, $\mathbf{C}$ is a unary covering. Suppose $\mathbf{C}$ is not a unary covering, then there exists $x\in U$ such that $|Md(x)|>1$.  So there exist $K_{1}, K_{2}\in reduct(\mathbf{C})$ such that $K_{1}, K_{2}\in Md(x),$ i.e., $x\in K_{1}\cap K_{2},$ which is contradictory with $ reduct(\mathbf{C})$ is a  partition of $U.$ Hence $\mathbf{C}$ is a unary covering.
According to Theorem~\ref{T:Distributive2},
 $\mathcal{F}$ is a distributive lattice. Moreover,
 $\mathcal{F}$ is a bounded lattice.

 Second, we need to prove only that 
 $\mathcal{F}$ is a complemented lattice. In other words,
 we need to prove that $X^{c}\in \mathcal{F}$ for any $X\in \mathcal{F}.$ Suppose $X^{c}\notin \mathcal{F},$
 then there exists $y\in X^{c}$ such that 
$K\nsubseteq X^{c}$ for any $K\in \mathbf{C}$ and $y\in K.$ Since $\mathbf{C}$ is a unary covering, then $Md(x)=\{K_{x}\}$ for any $x\in U.$ So $K_{y}\nsubseteq X^{c},$ i.e., there exists $x\in X$ such that $x\in K_{y}.$ Since $reduct(\mathbf{C})$ is a partition of $U,$ then $K_{x}=K_{y}.$ Therefore, $K_{x}\nsubseteq X,$ i.e., $x\notin FL(X).$ In other words, $FL(X)\neq X,$ which is contradictory with $
 X\in \mathcal{F}.$ Hence $X^{c}\in  \mathcal{F},$ i.e., $\mathcal{F}$ is a complemented lattice. Consequently,
 $\mathcal{F}$ is a boolean lattice.
\end{proof}

Theorem~\ref{T:Boolean2} shows that when the reduction of a covering is a partition of the universe, the fixed point set of covering induced by the covering is a boolean lattice. In the following, we prove that the fixed point set of covering is also a double Stone algebra.
\begin{theo}\label{T:DoubleStone2}
Let $\mathbf{C}$ be a covering of $U.$ If $reduct(\mathbf{C})$ is a partition of $U,$ then
$\mathcal{F}$ is a double Stone algebra.
\end{theo}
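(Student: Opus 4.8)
The plan is to reduce the whole statement to the single identity $X^{\ast}=X^{c}=X^{+}$, exactly as was done for the neighborhood case in Theorem~\ref{T:DoubleStone1}. First I would collect what is already on the table. By the opening argument in the proof of Theorem~\ref{T:Boolean2}, the hypothesis that $reduct(\mathbf{C})$ is a partition forces $\mathbf{C}$ to be unary; Theorem~\ref{T:Distributive2} then makes $\mathcal{F}$ distributive, and Theorem~\ref{T:Stone1} makes it pseudocomplemented and dual pseudocomplemented, with $X^{\ast}=FL(X^{c})$ and $X^{+}=\cup_{x\in X^{c}(x\in K\in\mathcal{J}(\mathcal{F}))}K$ for every $X\in\mathcal{F}$. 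Moreover Theorem~\ref{T:Boolean2} tells us $\mathcal{F}$ is a boolean lattice, so in particular $X^{c}\in\mathcal{F}$ for every $X\in\mathcal{F}$. Hence the only thing left to establish for a double Stone algebra (Definitions~\ref{D:Stone} and~\ref{D:DualStone}) is the Stone identity $X^{\ast}\vee X^{\ast\ast}=U$ and the dual Stone identity $X^{+}\wedge X^{++}=\emptyset$.

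The key step is to prove $X^{\ast}=X^{c}=X^{+}$ for every $X\in\mathcal{F}$. For the pseudocomplement this is immediate: since $X^{c}\in\mathcal{F}$ we have $FL(X^{c})=X^{c}$, so $X^{\ast}=X^{c}$. For the dual pseudocomplement I would argue as follows. As $\mathbf{C}$ is unary, each $x\in U$ has a single block $Md(x)=\{K_{x}\}$, and because $reduct(\mathbf{C})$ is a partition these blocks are pairwise disjoint. Being an element of $\mathcal{F}$, the set $X^{c}$ is a union of such blocks; by disjointness every block that meets $X^{c}$ is contained in $X^{c}$, so the defining union $\cup_{x\in X^{c}(x\in K\in\mathcal{J}(\mathcal{F}))}K$ collapses to $X^{c}$ itself. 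This yields $X^{+}=X^{c}$, and hence $X^{\ast}=X^{c}=X^{+}$.

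With this identity in hand the two Stone identities fall out by applying the operations once more. Since $X^{c}\in\mathcal{F}$, the same identity applied to $X^{c}$ gives $X^{\ast\ast}=(X^{c})^{\ast}=(X^{c})^{c}=X$ and likewise $X^{++}=(X^{c})^{+}=X$. Using the join and meet of $\mathcal{F}$ from Theorem~\ref{T:Lattice2}, I then compute $X^{\ast}\vee X^{\ast\ast}=X^{c}\cup X=U$ and $X^{+}\wedge X^{++}=FL(X^{c}\cap X)=FL(\emptyset)=\emptyset$ by Proposition~\ref{P:Prop3}. Together with distributivity this shows $\mathcal{F}$ is simultaneously a Stone algebra and a dual Stone algebra, hence a double Stone algebra.

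The main obstacle is the second half of the key identity, namely that the dual pseudocomplement $X^{+}$ actually collapses to $X^{c}$; this is exactly where the partition hypothesis, rather than mere unarity, is indispensable. The partition guarantees that the blocks $K_{x}$ are disjoint, so that no block straddles the boundary between $X$ and $X^{c}$. Without it a single block could contain points of both $X$ and $X^{c}$, in which case $X^{+}$ would strictly contain $X^{c}$ and the dual Stone identity would fail, precisely the phenomenon exhibited in Example~\ref{E:Example3}. I would therefore take care to use disjointness explicitly at that point and to keep the roles of $\mathcal{J}(\mathcal{F})$, the irreducible blocks, and the unique $K_{x}$ aligned throughout.
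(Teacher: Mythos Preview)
Your proposal is correct and follows essentially the same route as the paper: both reduce the problem to the identity $X^{\ast}=X^{c}=X^{+}$, obtain $X^{\ast}=X^{c}$ from Theorem~\ref{T:Boolean2} via $FL(X^{c})=X^{c}$, and use the partition hypothesis on $reduct(\mathbf{C})$ to force every relevant block meeting $X^{c}$ to lie inside $X^{c}$, giving $X^{+}=X^{c}$. The paper phrases the last step as a contradiction (a block $K_{y}$ with $y\in X^{c}$ containing some $x\in X$ would witness $x\notin FL(X)$), while you phrase it via disjointness of the $K_{x}$; these are the same argument.
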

\begin{proof}
For any $X\in \mathcal{F}$, we prove $X^{\ast}=X^{c}=X^{+}.$ 
According to Theorem~\ref{T:Boolean2},  $\mathcal{F}$ is a boolean lattice. Hence $X^{c}\in \mathcal{F},$ i.e., $FL(X^{c})=X^{c}.$  So $X^{\ast}=FL(X^{c})=X^{c}.$ For any $y\in X^{c}$, if $K_{y}\in Md(y),$ then $K_{y}$ is a join-irreducible element of $\mathbf{C}$. According to Proposition~\ref{P:Prop10}, $K_{y}\in \mathcal{J}(\mathcal{F}).$ Suppose there exists $x\in X$ such that $x\in K_{y}.$ Since $reduct(\mathbf{C})$ is a partition of $U,$ then $K_{y}\in Md(x).$ Hence $K_{y}\nsubseteq X,$ i.e., $x\notin FL(X).$  So $FL(X)\neq X,$ which is contradictory with $X\in \mathcal{F}.$
Therefore, $X^{+}=\cup_{x\in X^{c}(x\in K\in \mathcal{J}(\mathcal{F}))}K=\cup_{x\in X^{c}(K\in Md(x))}K=X^{c}.$
Similarly, we can prove that $X^{\ast\ast}=X^{cc}=X=X^{++}.$
Therefore, $X^{\ast}\cup X^{\ast\ast}=U, X^{+}\cap X^{++}=\emptyset,$ i.e.,
$\mathcal{F}$ is both a Stone and a dual Stone algebra. Consequently,
$\mathcal{F}$ is a double Stone algebra.
\end{proof}

\section{Conclusions}
In this paper, we established two types of partially ordered sets by the fixed points of the lower approximations of
the first and sixth types of covering-based rough sets, respectively. one 
is called the fixed point set of neighborhoods,
the other 
is called the fixed point set of covering. Both of them are lattices, where the least upper bound of any two
elements of the fixed point set of neighborhoods is the join of these two elements and
the greatest lower bound is the intersection of these two elements.
In the fixed point set of covering, the greatest lower bound of any two elements is the lower
approximation of the intersection of these two elements. For any covering, we proved that the fixed point set of neighborhoods is
 both a complete and a distributive lattice. It is also a double $p-$algebra. Especially, when the neighborhoods form a partition of
 the universe, the fixed point set of neighborhoods is both a boolean lattice and a double Stone algebra.
  Similarly, for any covering, the fixed point set of covering is a complete lattice. When a covering is unary, the fixed point set of covering induced by this covering is also both a distributive lattice and a double $p-$algebra.
 The fixed point set of covering is both a boolean lattice and a double Stone algebra when the reduction of the covering forms a partition of the universe.\\

\section{Acknowledgments}
This work is supported in part by the National Natural Science Foundation
of China under Grant No. 61170128, the Natural Science Foundation of Fujian
Province, China, under Grant Nos. 2011J01374 and 2012J01294, and the Science
and Technology Key Project of Fujian Province, China, under Grant No.
2012H0043.



\begin{thebibliography}{10}

\bibitem{AjmalJain09Some}
Ajmal, N., Jain, A.:
\newblock Some constructions of the join of fuzzy subgroups and certain
  lattices of fuzzy subgroups with sup property.
\newblock Information Sciences \textbf{179} (2009)  4070--4082


\bibitem{BianucciCattaneoCiucci07Entropies}
Bianucci, D., Cattaneo, G., Ciucci, D.:
\newblock Entropies and co-entropies of coverings with application to
  incomplete information systems.
\newblock Fundamenta Informaticae \textbf{75} (2007)  77--105

\bibitem{Birkhoff95Lattice}
Birkhoff, G.:
\newblock Lattice theory.
\newblock American Mathematical Society (1995)

\bibitem{BonikowskiBryniarskiWybraniecSkardowska98Extensions}
Bonikowski, Z., Bryniarski, E., Wybraniec-Skardowska, U.:
\newblock Extensions and intentions in the rough set theory.
\newblock Information Sciences \textbf{107} (1998)  149--167

\bibitem{ChenZhangYeungTsang06Rough}
Chen, D., Zhang, W., Yeung, D., Tsang, E.:
\newblock Rough approximations on a complete completely distributive lattice
  with applications to generalized rough sets.
\newblock Information Sciences \textbf{176} (2006)  1829--1848

\bibitem{DaveyPriestley90Introduction}
Davey, B., Priestley, H.:
\newblock Introduction to Lattices and Order.
\newblock Cambridge University Press, Cambridge (1990)

\bibitem{DengChenXuDai07Anovel}
Deng, T., Chen, Y., Xu, W., Dai, Q.:
\newblock A novel approach to fuzzy rough sets based on a fuzzy covering.
\newblock Information Sciences \textbf{177} (2007)  2308--2326

\bibitem{EstajiHooshmandaslDawaz12Rough}
Estaji, A., Hooshmandasl, M., Davvaz, B.:
\newblock Rough set theory applied to lattice theory.
\newblock Information Sciences \textbf{200} (2012)  108--122

\bibitem{GengQuWang04Discrete}
Geng, S., Qu, W., Wang, H.:
\newblock Discrete Mathematics Tutorials.
\newblock Beijing: Peking University Press (2004)

\bibitem{GhanimMustafaAziz11Onlower}
Ghanim, M., Mustafa, H., Aziz, S.A.E.:
\newblock On lower and upper intension order relations by different cover
  concepts.
\newblock Information Sciences \textbf{181} (2011)  3723--3734

\bibitem{Gratzer78General}
Gratzer, G.:
\newblock General lattice theory.
\newblock New York, San Francisco: Academic Press (1978)

\bibitem{Gratzer71Lattice}
Gratzer, G.:
\newblock Lattice Theory: First Concepts and Distributive Lattices.
\newblock San Francisco: W. H. Freeman and Company (1971)

\bibitem{InuiguchiHiranoTsumoto03RoughSet}
Inuiguchi, M., Hirano, S., Tsumoto(Eds.), S.:
\newblock Rough Set Theory and Granular Computing. Volume 125.
\newblock Studies in Fuzziness and Soft Computing,Springer-Verlag, Heidelberg
  (2003)

\bibitem{Katrinak74Injection}
Katrinak, T.:
\newblock Injective double stone algebras.
\newblock Algebra Universalis \textbf{4} (1974)  259--267

\bibitem{Katrinak74Construction}
Katrinak, T.:
\newblock Construction of regular double p-algebra.
\newblock Bulletin de la Societe Rojale des Sciences de Liege \textbf{43}
  (1974)  283--290


\bibitem{LinYaoZadeh01Rough}
Lin, T., Yao, Y., Zadeh(Eds.), L.:
\newblock Rough sets, Granular Computing and Data Mining.
\newblock Studies in Fuzziness and Soft Computing, Physica-Verlag, Heidelberg,
  (2001)

\bibitem{LiuSai09AComparison}
Liu, G., Sai, Y.:
\newblock A comparison of two types of rough sets induced by coverings.
\newblock International Journal of Approximate Reasoning \textbf{50} (2009)
  521--528


 \bibitem{MedinaOjeda10Multi}
Medina, J., Ojeda-Aciego, M.:
\newblock Multi-adjoint t-concept lattices.
\newblock Information Sciences \textbf{180} (2010)  712--725

  \bibitem{Pawlak82Rough}
Pawlak, Z.:
\newblock Rough sets.
\newblock International Journal of Computer and Information Sciences
  \textbf{11} (1982)  341--356


\bibitem{Pawlak91Rough}
Pawlak, Z.:
\newblock Rough sets: theoretical aspects of reasoning about data.
\newblock System Theory, Knowledge Engineering and Problem Solving Kluwer
  Academic Publishers, Dordrecht, Netherlands, (1991)


\bibitem{Pomykala93Approximation}
Pomykala, J.:
\newblock Approximation, similarity and rough constructions.
\newblock In: ILLC Prepublication series, University of Amsterdam. Volume
  CT-93-07. (1993)


\bibitem{QianLiangDang10Incomplete}
Qian, Y., Liang, J., Dang, C.:
\newblock Incomplete multigranulation rough set.
\newblock IEEE Transaction on Systems, Man and Cybernetics-Part A: System and
  Humans \textbf{40} (2010)  420--431


\bibitem{SlowinskiVanderpooten95Similarity}
Slowinski, R., Informatyki, I., Vanderpooten, D.:
\newblock Similarity relation as a basis for rough approximations (1995)

\bibitem{SyrrisPetridis11Alattic}
Syrris, V., Petridis, V.:
\newblock A lattice-based neuro-computing methodology for real-time human
  action recognition.
\newblock Information Sciences \textbf{181} (2011)  1874--1887


\bibitem{WangLiuCao10ANew}
Wang, L., Liu, X., Cao, J.:
\newblock A new algebraic structure for formal concept analysis.
\newblock Information Sciences \textbf{180} (2010)  4865--4876

\bibitem{WangZhuZhuMin12Quantitative}
Wang, S., Zhu, Q., Zhu, W., Min, F.:
\newblock Quantitative analysis for covering-based rough sets through the upper
  approximation number.
\newblock to appear in Information Sciences (2012)

\bibitem{WangZhuMin11Transversal}
Wang, S., Zhu, W., Min, F.:
\newblock Transversal and function matroidal structures of covering-based rough
  sets.
\newblock In: Rough Sets and Knowledge Technology. (2011)  146--155

\bibitem{WangZhuZhu10Structure}
Wang, S., Zhu, P., Zhu, W.:
\newblock Structure of covering-based rough sets.
\newblock International Journal of Mathematical and Computer Sciences
  \textbf{6} (2010)  147--150

  \bibitem{ZhouWu08OnGeneralized}
Zhou, L., Wu, W.:
\newblock On generalized intuitionistic fuzzy rough approximation operators.
\newblock Information Sciences \textbf{178} (2008)  2448--2465

\bibitem{Zhu09RelationshipBetween}
Zhu, W.:
\newblock Relationship between generalized rough sets based on binary relation
  and covering.
\newblock Information Sciences \textbf{179} (2009)  210--225

\bibitem{Zhu07Topological}
Zhu, W.:
\newblock Topological approaches to covering rough sets.
\newblock Information Sciences \textbf{177} (2007)  1499--1508

\bibitem{ZhuWang07OnThree}
Zhu, W., Wang, F.:
\newblock On three types of covering rough sets.
\newblock IEEE Transactions on Knowledge and Data Engineering \textbf{19}
  (2007)  1131--1144

\bibitem{ZhuWang03Reduction}
Zhu, W., Wang, F.:
\newblock Reduction and axiomization of covering generalized rough sets.
\newblock Information Sciences \textbf{152} (2003)  217--230

\bibitem{ZhuWang07OnThree}
Zhu, W., Wang, F.:
\newblock On three types of covering rough sets.
\newblock IEEE Transactions on Knowledge and Data Engineering \textbf{19}
  (2007)  1131--1144

\bibitem{ZhuWang03Reduction}
Zhu, W., Wang, F.:
\newblock Reduction and axiomization of covering generalized rough sets.
\newblock Information Sciences \textbf{152} (2003)  217--230

\bibitem{ZhuWang11Matroidal}
Zhu, W., Wang, S.:
\newblock Matroidal approaches to generalized rough sets based on relations.
\newblock International Journal of Machine Learning and Cybernetics \textbf{2}
  (2011)  273--279








\end{thebibliography}

\end{document}